\documentclass{article}
\usepackage{natbib}
\usepackage[noend]{algpseudocode}
\usepackage[english]{babel}
\usepackage[utf8]{inputenc}
\usepackage{algorithm}
\usepackage[noend]{algpseudocode}
\usepackage{amsmath}
\usepackage{amsfonts}
\usepackage{amssymb}
\usepackage{graphicx}
\usepackage{todonotes}
\usepackage{pgfplots}
\usepackage{hyperref}
\usepackage{booktabs,siunitx}

\usepackage[toc,page,header]{appendix}
\usepackage{etoc}
\etocdepthtag.toc{mtchapter}
\etocsettagdepth{mtchapter}{subsection}
\etocsettagdepth{mtappendix}{none}

\usepackage[T1]{fontenc}
\usepackage[font=small,labelfont=bf,tableposition=top]{caption}

\DeclareCaptionLabelFormat{andtable}{#1~#2  \&  \tablename~\thetable}

\newcommand{\eps}{\varepsilon}

\newcommand{\calY}{\mathcal{Y}}

\newcommand{\calA}{\mathcal{A}}
\newcommand{\calD}{\mathcal{D}}
\newcommand{\calZ}{\mathcal{Z}}

\newcommand{\reals}{\mathbb{R}}

\usepackage{amsthm}

\newtheorem{theorem}{Theorem}[section]
\newtheorem{lemma}[theorem]{Lemma}
\newtheorem{corollary}[theorem]{Corollary}
\newtheorem{definition}[theorem]{Definition}

\usepackage{amsmath,algorithm,tabularx}
\makeatletter
\newcommand{\multiline}[1]{%
  \begin{tabularx}{\dimexpr\linewidth-\calALG@thistlm}[t]{@{}X@{}}
    #1
  \end{tabularx}
}
\makeatother

\usepackage{blindtext}
\presetkeys%
    {todonotes}%
    {inline,backgroundcolor=yellow}{}%

\newcommand{\ceil}[1]{\left\lceil #1 \right\rceil}

\usepackage{fullpage}

\title{Differentially Private Approximate Quantiles}
\author{Haim Kaplan\thanks{Tel Aviv University and Google Research. \texttt{haimk@tau.ac.il}. Partially supported by Israel Science Foundation (grant 1595/19), German-Israeli Foundation (grant 1367/2017), and the Blavatnik Family Foundation.}
\and
Shachar Schnapp\thanks{Ben-Gurion University. \texttt{schnapp@post.bgu.ac.il}. Partially supported by Israel Science Foundation (grant 1871/19) and by the Cyber Security Research Center at Ben-Gurion University of the Negev.}
\and
Uri Stemmer\thanks{Tel Aviv University and Google Research. \texttt{u@uri.co.il}. Partially supported by Israel Science Foundation (grant 1871/19) and by Len Blavatnik and the Blavatnik Family foundation.}
}

\usepackage{forest}
\usetikzlibrary{calc} %
\newcounter{levelcount} %

\newcommand{\algoname}{\texttt{ApproximateQuantiles}}
\newcommand{\algoref}{\hyperref[alg:bf]{AQ algorithm}}
\newcommand{\nlayer}{\log{m} + 1}

\begin{document}
\date{October 11, 2021}
\maketitle

\begin{abstract}
In this work we study the problem of differentially private (DP) quantiles, in which given dataset $X$  and quantiles $q_1, ..., q_m \in [0,1]$, we want to output $m$ quantile estimations which are as close as possible to the true quantiles and preserve DP. 
We describe a simple recursive DP algorithm, which we call \algoname~(AQ), for this task.
We give a worst case upper bound on its error, and show that its error is much lower than of previous implementations on several different datasets. 
Furthermore, it gets this low error while running time two orders of magnitude faster that the best previous implementation.
\end{abstract}

\section{Introduction}

Quantiles are the values that divide a sorted dataset in a certain proportion. They are one of the most basic and important data statistics, with various usages, ranging from computing the median to standardized test scores \citep{GRE}.
Given sensitive data, publishing the quantiles can expose information about the individuals that are part of the dataset. For example, suppose that a company wants to publish the median of its users' ages. Doing so means to reveal the date of birth of a certain user, thus compromising the user's privacy. \textit{Differential privacy} (DP) offers a solution to this problem by requiring the output distribution of the computation to be insensitive to the data of any single individual. This leads us to the definition of the DP-quantiles problem: 
\begin{definition}[The DP-Quantiles Problem]
Let $a,b\in\reals$.
Given a dataset $X\subseteq (a,b)$ containing $n$ points from $(a,b)$, and a set of $m$ quantiles $0< q_1\leq\dots\leq q_m < 1$, privately identify quantile estimations $v_1, ..., v_m$ such that for every $j\in[m]$ we have $\Pr_{x \sim U_X} [x \leq v_j ] 	\approx q_j$,\footnote{We will make this precise later.} where $U_X$ is the uniform distribution over the data $X$.\footnote{For simplicity, we assume that there are no duplicate points in $X$.}
\end{definition}

On the theory side, the DP-quantiles problem is relatively well-understood, with advanced constructions achieving very small asymptotic error \citep{BeimelNS16,bun2015differentially,kaplan2020privately}. However, as was recently observed by \cite{gillenwater2021differentially}, due to the complexity of these advanced constructions and their large hidden constants, their practicality is questionable. This led \cite{gillenwater2021differentially} to design a simple algorithm for the DP-quantiles problem that performs well in practice. %
In this work we revisit the DP-quantiles problem. We build on the theoretical construction of \cite{bun2015differentially}, and present a new (and simple) practical algorithm that obtains better utility and runtime than the state-of-the-art construction of \cite{gillenwater2021differentially} (and all other exiting implementations).

\subsection{Our Contributions}
We provide \algoname, an algorithm and implementation for the DP-quantiles problem (Section~\ref{sec:algo}). We prove a worst case  bound on the error of the \algoref~for
arbitrary $m$ quantiles, and a tighter error bound for the case 
 of uniform quantiles $q_i = i/(m+1)$, $i=1,\ldots,m$ (Section~\ref{sec:math_error}). We experimentally evaluated the \algoref~and conclude that it obtains higher accuracy and faster runtime than the existing state-of-the-art implementations (Section~\ref{sec:experimental}).
In addition, we adapt  our algorithm (and its competitors) to the definition of Concentrated Differential Privacy (zCDP) \cite{bun2016concentrated}.
We show that its dominance over other methods is even more significant with this definition of privacy.

\subsubsection{A Technical Overview of Our Construction: Algorithm \algoname}

Our algorithm operates using a  DP algorithm for estimating a {\em single} quantile. Specifically, we assume that we have a DP algorithm
$\calA: \reals^2 \times \reals^n \times (0,1) \to \reals$ 
that takes a domain $I=(a,b)\in \reals^2$ (an interval on the line), 
a database $X\in \reals^n$ (containing $n$ points in $I$), and a single quantile $q\in(0,1)$, and returns a point $v\in I$ such that $\Pr_{x \sim U_X} [x \leq v ] 	\approx q$. Estimating a single quantile is a much easier task, with a very simple algorithm based on the exponential mechanism \citep{mcsherry2007mechanism}.

A naive approach of using $\calA$ for solving the DP-quantiles problem would be to run it $m$ times (once for every given quantile). However, due to composition costs of running $m$ DP algorithms on the same data, the error with this approach would scale polynomially with $m$. As we demonstrate in our experimental results (in Section \ref{sec:experimental}), this leads to a significantly reduced performance. In contrast, as we explain next, in our algorithm the error scales only logarithmically in the number of quantiles $m$.

The \algoref~privately estimates (using $\calA$)  the ``middle quantile'' $p=q_{\ceil{m/2}}$ and observes an answer $v$. Then it  splits the problem into two sub-problems. The first sub-problem is defined over the dataset $X_\ell$ which contains the values from $X$ that are smaller than $v$. Its goal is to privately compute the quantiles $(q_1,.., q_{\ceil{m/2} - 1})/p$ on  $X_\ell$. The second problem is defined over the dataset $X_u$ which contains the values from $X$ that are greater than $v$. The goal of the second problem is to privately compute $(q_{\ceil{m/2} + 1} - p, \dots, q_m -p) / (1 - p)$ on $X_u$. 
Notice that the recursive sub-problems have smaller ranges.

Specifically, at the first level of the recursion tree we compute one quantile $q_1^1 \triangleq q_{\ceil{m/2}}$ on data points from a range $(a, b)$.
We denote by $v_1^1$ the estimate which we receive.
At the second level we compute two normalized quantiles $q_2^1 \triangleq q_{\ceil{m/4}}/q_1^1$ and $q_2^2 \triangleq q_{\ceil{3m/4}}/(1 - q_1^1)$. The 
quantile $q_2^1$ 
 is computed on data in the range $(a, v_1^1)$, and we denote its estimate by 
 $v_2^1$. The second quantile
is computed on the data  in the range $[v_1^1, b]$,
and we denote its estimate by 
 $v_2^1$, and so on. Figure~\ref{fig:tree} illustrates this recursion tree.

\begin{figure}[H]
    \centering
    \caption{The
    recursion tree of the algorithm. At each node we write the range of the corresponding subproblem and above it the part of the data in this subproblem.}
    \label{fig:tree}
    
\begin{forest}
[{$(a, b)$} , name=root, label=${X=X_1^1}$ %
[{$(a, v_1^1)$}, name=level1, label=$X_2^1$ %
[{$(a, v_2^1)$}, name=level2, label=$X_3^1$ [$\vdots$]][{$[v_2^1, v_1^1]$},label=$X_3^2$ [$\vdots$]]] %
[{$[v_1^1, b]$}, label=$X_2^2$
[{$[v_1^1, v_2^2]$},label=$X_3^3$[$\vdots$]][{$[v_2^2, b]$},label=$X_3^4$[$\vdots$]]]]
{%
}
\end{forest}

\end{figure}
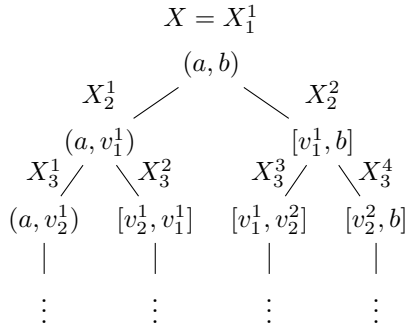

A few remarks are in order.
\begin{enumerate}
    \item By shrinking the data range from one level to the next, we effectively reduce the error of algorithm $\calA$ (because its error depends on the data range). We have found that, in practice, this has a large impact on our accuracy guarantees.
    \item Note that every single data point participates only in $\log(m)+1$ sub-problems (one at each level). This allows our privacy loss (and, hence, our error) to grow only logarithmically in $m$.
\end{enumerate}

\subsection{Related Work}
Private (Comulative Distribution Function) CDF estimation can be applied to estimating all the quantiles \citep{bun2015differentially, kaplan2020privately}, however the theoretically best known algorithm for private CDF estimation \citep{bun2015differentially} relies on several reductions, thus limiting its practicality. We present our \algoref, inspired by \cite{bun2015differentially}, taking their CDF estimation algorithm into practice. As opposed to the algorithm proposed by \cite{bun2015differentially}, our algorithm avoids discretization of the domain by solving the DP single quantile problem using  the exponential mechanism \citep{smith2011privacy} instead of an interior point algorithm \citep{bun2015differentially}. Moreover, we split the data according the desired quantiles (rather than uniformly) while avoiding using the laplace mechanism in the process. A common tree-based approach to CDF estimation is included in our empirical error analysis (Section~\ref{sec:experimental}).
A recent work by \cite{gillenwater2021differentially} proposed an instance of the exponential mechanism that simultaneously draws $m$ quantiles. The naive implementation of this exponential mechanism for $m$ quantiles is computationally difficult, but  \cite{gillenwater2021differentially}  provide a
sophisticated
 $O(mn \log{n} + m^2n)$ implementation. The empirical experiments of \cite{gillenwater2021differentially} show that when the number of quantiles is small, $\mathrm{JointExp}$ algorithm preforms best. A comparison with this algorithm is included in our experiments (Section~\ref{sec:experimental}).

\section{Preliminaries}
\label{sec:prelims}
A database $X$ is a set of $n$ points from 
 some data domain $\calD$.\footnote{The domain in this paper is the interval $(a,b)$.} Differential privacy uses the definition of \emph{neighbors} as follows.
\begin{definition}
    Databases $X$ and $X' \in \calD^n$ are \emph{neighbors}, denoted $X \sim X'$, if one of them can be obtained from the other by adding or removing a single element.
\end{definition}

We use the \emph{add-remove} definition of neighbors, as opposed to the \emph{swap} definition (in which we replace a point instead of deleting or inserting it), although it is important to note that our algorithm \algoname~(AQ) easily adapts to the swap framework.

Differential privacy can be defined using the notion of neighboring databases as follows:
\begin{definition}[\citet{dwork2006calibrating}]
    $\calA$ randomized algorithm $\calA \colon \calD^* \to \calY$ is $(\eps,\delta)$-differentially private ($(\eps,\delta)$-DP) if, for every pair of neighboring databases $X, X'$ and every output subset $Y \subseteq \calY$, 
    \[
    \Pr_{\calA}[A(X) \in Y] \leq e^\eps \cdot \Pr_{\calA}[A(X') \in Y] + \delta.
    \]
    When $\delta > 0$, we say $\calA$ satisfies \emph{approximate} differential privacy. If $\delta = 0$, we say $\calA$ satisfies \emph{pure} differential privacy, and shorthand this as $\eps$-differential privacy ($\eps$-DP).
\end{definition}

The composition property is a key benefit of differential privacy: an algorithm that relies on differentially private subroutines inherits an overall privacy guarantee by simply adding up the privacy guarantees of its components.
\begin{lemma}[\citet{DworkKMMN06,dwork2006calibrating}]
\label{lem:comp}
    Let $\calA_1, \ldots, \calA_k$ be $k$ algorithms that respectively satisfy $(\eps_1, \delta_1)$-$, \ldots, (\eps_k, \delta_k)$-differential privacy. Then running $\calA_1, \ldots, \calA_k$ satisfies $\left(\sum_{i=1}^k \eps_i, \sum_{i=1}^k \delta_i\right)$-differential privacy.
\end{lemma}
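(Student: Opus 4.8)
The plan is to prove the claim by induction on $k$, reducing everything to the case $k=2$. For the inductive step, note that running $\calA_1,\dots,\calA_k$ is the same as first running the composed mechanism $\calB=(\calA_1,\dots,\calA_{k-1})$ and then running $\calA_k$ on the same input; by the induction hypothesis $\calB$ is $\bigl(\sum_{i=1}^{k-1}\eps_i,\sum_{i=1}^{k-1}\delta_i\bigr)$-DP, so it suffices to show that the composition of a mechanism that is $(\eps_1,\delta_1)$-DP with one that is $(\eps_2,\delta_2)$-DP is $(\eps_1+\eps_2,\delta_1+\delta_2)$-DP. The very same argument will also cover the adaptive variant, in which the mechanism used at step $i$ may depend on the outputs of steps $1,\dots,i-1$; this is the form actually invoked by \algoname, since the ranges of the recursive sub-problems depend on the earlier quantile estimates.

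For the two-mechanism case, fix neighboring databases $X\sim X'$ and an output event $S\subseteq\calY_1\times\calY_2$, and for $y_1\in\calY_1$ let $S_{y_1}=\{y_2:(y_1,y_2)\in S\}$ be the corresponding slice. Writing $\calB(X)=(\calA_1(X),\calA_2(X))$ with independent coins and conditioning on the first coordinate,
\[
\Pr[\calB(X)\in S]=\E_{y_1\sim\calA_1(X)}[h(y_1)],\qquad h(y_1):=\Pr[\calA_2(X)\in S_{y_1}].
\]
The key device is to split $h=h'+h''$ where $h'(y_1):=\min\bigl(h(y_1),\,e^{\eps_2}\Pr[\calA_2(X')\in S_{y_1}]\bigr)$ and $h''(y_1):=h(y_1)-h'(y_1)=\max\bigl(0,\,h(y_1)-e^{\eps_2}\Pr[\calA_2(X')\in S_{y_1}]\bigr)$. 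Applying the $(\eps_2,\delta_2)$-DP guarantee of $\calA_2$ to the single set $S_{y_1}$ gives $h''(y_1)\le\delta_2$ for every $y_1$, hence $\E_{y_1\sim\calA_1(X)}[h'']\le\delta_2$. For the first piece, $h'$ takes values in $[0,1]$, so the $(\eps_1,\delta_1)$-DP guarantee of $\calA_1$ in its \emph{bounded-function} form --- which follows from the set version by the layer-cake identity, integrating $\Pr[\calA_1(X)\in\{h'\ge t\}]\le e^{\eps_1}\Pr[\calA_1(X')\in\{h'\ge t\}]+\delta_1$ over $t\in[0,1]$ --- yields $\E_{y_1\sim\calA_1(X)}[h']\le e^{\eps_1}\,\E_{y_1\sim\calA_1(X')}[h']+\delta_1$. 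Finally, $h'(y_1)\le e^{\eps_2}\Pr[\calA_2(X')\in S_{y_1}]$ pointwise, so $\E_{y_1\sim\calA_1(X')}[h']\le e^{\eps_2}\Pr[\calB(X')\in S]$. Summing the two pieces gives $\Pr[\calB(X)\in S]\le e^{\eps_1+\eps_2}\Pr[\calB(X')\in S]+\delta_1+\delta_2$, which is what we want (and the same inequality holds with $X$ and $X'$ swapped).

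I expect the main obstacle to be precisely the bookkeeping of the additive terms. The naive approach --- plug the $(\eps_2,\delta_2)$ bound into the expectation and then apply the $(\eps_1,\delta_1)$ bound to the resulting $[0,1]$-valued function --- loses a multiplicative factor and produces $\delta_1+e^{\eps_1}\delta_2$ (or, in the other order, $\delta_2+e^{\eps_2}\delta_1$) instead of the claimed $\delta_1+\delta_2$. The split $h=h'+h''$ is what removes this loss: the residual $h''$ is already \emph{pointwise} at most $\delta_2$, so it contributes its mass without ever being amplified by $e^{\eps_1}$, while the truncated part $h'$ is genuinely dominated by $e^{\eps_2}\Pr[\calA_2(X')\in\,\cdot\,]$ and so absorbs both exponential factors. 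Everything else is routine: the induction on $k$ (folding the first $k-1$ mechanisms into a single one), the layer-cake passage from the set-based definition of DP to its bounded-function form, and the standard measure-theoretic care needed to condition on $y_1$ and to interchange integral and supremum when $\calY_1$ and $\calY_2$ are general output spaces.
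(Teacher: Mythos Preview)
The paper does not give its own proof of this lemma; it simply cites \citet{DworkKMMN06,dwork2006calibrating} and uses the result as a black box. Your argument is a correct and standard proof of basic composition, including the clean $h=h'+h''$ split that avoids the lossy $\delta_1+e^{\eps_1}\delta_2$ term, and your remark that the same proof handles the adaptive setting (which is indeed how \algoname\ applies it across recursion levels) is apt.
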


The above lemma has come to be known as ``basic composition'' in the literature of differential privacy (see \cite{composition} for more advanced composition theorems). Given Lemma~\ref{lem:comp}, 
a simplistic approach for the DP-quantiles problem would be
to estimate each of the $m$ quantiles separately, using an  $\frac{\eps}{m}$-DP algorithm, and then to apply Lemma~\ref{lem:comp} in order to show that the $m$ estimations together satisfy $\eps$-DP. As we will see, our algorithm outperforms this simplistic approach by a large gap.

We will also rely on the exponential mechanism, a common building block for differentially private algorithms.
\begin{definition}[\citet{mcsherry2007mechanism}]
\label{def:exp}
    Given utility function $u \colon \calD^* \times S \to \mathbb{R}$ mapping $(\text{database}, \text{output})$ pairs to real-valued scores with $L_1$ sensitivity
    \begin{equation*}
    \Delta_u \triangleq \max_{X \sim X', s \in S} |u(X, s) - u(X', s)|,
    \end{equation*}
    the probability that the exponential mechanism $M$ has an output $s \in S$ is:
    \[
    \Pr[M(X) = s] \propto \exp\left(\frac{\eps u(X,s)}{2\Delta_u}\right),
    \]
    where $\propto$ elides the normalization factor.
\end{definition}
The exponential mechanism maintains the database's privacy while prioritizing its higher utility outputs. 
\begin{lemma}[\citet{mcsherry2007mechanism}]
\label{lem:exp_dp}
The mechanism described in Definition~\ref{def:exp} is $\eps$-DP and for error parameter $\gamma>0$ satisfies that:
\[
\Pr_{s \in S}[u(X, s) > \mathrm{Opt}-\gamma] \leq |S|\cdot \exp\left(-\frac{\eps \gamma}{2 \Delta}\right),
\]
where $\mathrm{Opt} = \max_{s \in S} \{u(X, s)\}$.
\end{lemma}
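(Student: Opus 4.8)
The plan is to prove the two assertions — that $M$ is $\eps$-DP, and that it rarely outputs a low-utility answer — separately, in both cases working directly with the unnormalized weights $w_X(s)\triangleq\exp\!\big(\tfrac{\eps\,u(X,s)}{2\Delta}\big)$ and the normalizer $Z_X\triangleq\sum_{s\in S}w_X(s)$ (to be read as an integral against a fixed base measure when $S$ is a continuum, as in the single-quantile application, with $|S|$ interpreted as its total mass).

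For privacy, fix neighbors $X\sim X'$ and an output $s\in S$, and factor the density ratio as
\[
\frac{\Pr[M(X)=s]}{\Pr[M(X')=s]}=\frac{w_X(s)}{w_{X'}(s)}\cdot\frac{Z_{X'}}{Z_X}.
\]
The first factor is at most $e^{\eps/2}$ because $|u(X,s)-u(X',s)|\le\Delta$ by the definition of $L_1$-sensitivity in Definition~\ref{def:exp}. The same per-element bound gives $w_{X'}(s')\le e^{\eps/2}w_X(s')$ for every $s'$, so summing (integrating) over $s'$ yields $Z_{X'}\le e^{\eps/2}Z_X$, bounding the second factor by $e^{\eps/2}$ as well. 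Multiplying, the ratio is at most $e^{\eps}$, which is $\eps$-DP. The one point to keep in mind is that the sensitivity factor is ``spent twice'' — once in the numerator, once through the normalizer — which is exactly why the exponent in Definition~\ref{def:exp} carries the extra factor of $2$.

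For utility, let $S_{\mathrm{bad}}\triangleq\{s\in S:u(X,s)\le\mathrm{Opt}-\gamma\}$ be the set of outputs whose utility falls short of optimal by more than $\gamma$. Each such $s$ has $w_X(s)\le\exp\!\big(\tfrac{\eps(\mathrm{Opt}-\gamma)}{2\Delta}\big)$, so the total weight on $S_{\mathrm{bad}}$ is at most $|S_{\mathrm{bad}}|\cdot\exp\!\big(\tfrac{\eps(\mathrm{Opt}-\gamma)}{2\Delta}\big)$. On the other hand, since some output attains (or approaches) $\mathrm{Opt}$, we have $Z_X\ge\exp\!\big(\tfrac{\eps\,\mathrm{Opt}}{2\Delta}\big)$. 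Dividing the former by the latter,
\[
\Pr[M(X)\in S_{\mathrm{bad}}]\le|S_{\mathrm{bad}}|\cdot\exp\!\Big(-\tfrac{\eps\gamma}{2\Delta}\Big)\le|S|\cdot\exp\!\Big(-\tfrac{\eps\gamma}{2\Delta}\Big),
\]
which is the claimed bound.

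None of this is technically hard — it is the classical argument of \citet{mcsherry2007mechanism} — so the only ``obstacle'' is handling two routine technicalities cleanly: interpreting $Z_X$ and $|S|$ via a base measure when the output space is continuous (relevant because our $\calA$ runs the exponential mechanism over an interval), and, if the supremum $\mathrm{Opt}$ is not attained, replacing the maximizer by a near-maximizer $s^*$ with $u(X,s^*)\ge\mathrm{Opt}-\eta$ and letting $\eta\to0$.
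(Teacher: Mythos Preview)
Your proof is correct and is precisely the classical argument of \citet{mcsherry2007mechanism}. The paper does not actually supply its own proof of Lemma~\ref{lem:exp_dp}; it simply cites the result and then, in Appendix~\ref{app:exp_dp} (Lemma~\ref{lem:expu}), proves an adapted version for the continuous single-quantile setting using the same strategy you use for the utility part (bounding the weight of the bad set above and the normalizer below by the weight of an optimal interval).
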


In our case in this paper, the solution space $S$ is infinite. Specifically, it is an interval $(a,b)$. Lemma \ref{lem:expu} in
Appendix \ref{app:exp_dp} 
adapts the exponential mechanism to this setting.

\section{\algoname~Algorithm}\label{sec:algo}

This section demonstrates our differentially private  quantiles algorithm, \algoname. As we mentioned in the introduction, our algorithm uses a subroutine $\calA$ for privately estimating a single quantile. We implement algorithm $\calA$ using the exponential mechanism (see Appendix~\ref{app:exp_dp}).
We remark that, if the dataset is given sorted, then $\calA$ runs in linear time, and the time required for all recursive calls at the same level of the recursion tree is $O(n)$. It follows that the overall time complexity of \algoref~is $O(n\log{m})$.

\begin{algorithm}

\caption{- \algoname~(AQ) Differentially Private Quantiles}
\label{alg:bf}
\textbf{Input:} Domain $\calD=(a,b)$, database $X = (x_1, \dots, x_n) \in \calD^n$, quantiles $Q = (q_1, \dots, q_m)$, %
privacy parameter $\eps$.

\begin{algorithmic}[1]
\State Let 
$\calA: \reals^2 \times \reals^n \times (0,1) \to \reals$ 
be a
$\frac{\eps}{\nlayer}$-DP mechanism for a single quantile.
\Function{F}{$(a , b), X, Q$}
\If {$m = 0$} \textbf{return} $null$
\ElsIf {$m = 1$} 
     \State \textbf{return} $\{\calA((a,b), X,q_1)\}$
\EndIf
\State Let $\hat{m} = \ceil{m/2}$
\State Let $p = q_{\hat{m}}$
\State Let $v = \calA((a,b),X,p)$
\State Let $X_\ell, X_u   = \{x \in X \mid  x < v \},  \{x \in X \mid   x > v \}$
\State Let $Q_\ell =  (q_1 , \dots, q_{\hat{m} - 1})/ p$
\State Let $Q_u =  (q_{\hat{m} + 1} - p, \dots, q_m -p) / (1 - p)$
\State \textbf{return} $\Call{F}{(a, v),X_\ell, Q_\ell} \cup \{v\} \cup \Call{F}{(v, b),X_u, Q_u}$ 
\EndFunction
\end{algorithmic}
\end{algorithm}

We denote by 
$q_i^j$ the normalized quantile computed by the $j$th subproblem at the $i$th level of the algorithm's recursion tree, and by $v_i^j$ the result of this computation. Note that the number of subproblems at level $i$ is $2^{i-1}$. (At the last level some of the subproblems may be empty.) We let $X_{i}^{j}$ be the data used to compute $v_i^j$. It was created by 
 splitting the data $X_{i-1}^{\ceil{j/2}}$ into two parts according to $v_{i-1}^{\ceil{j/2}}$.
 We note that  $X_i^{j_1}$ and 
 $X_i^{j_2}$ are disjoint for a fixed level $i$ and $j_1\not= j_2$.
 This allows us  to avoid splitting our privacy budget between subproblems at the same level of the recursion, and we split it only between different levels, see Section \ref{sec:privacy}.
We also denote 
 $Q_i \triangleq (q_i^1,\dots, q_i^{2^{i-1}})$
and 
  $V_i \triangleq (v_i^1,\dots, v_i^{2^{i-1}})$.

We assume that the data does not contain duplicate points.
This can be enforced by adding small perturbations to the points.
The answer we return is with respect to the 
perturbed points. In fact the utility of our algorithm depends on the minimum distance between a pair of points.

\subsection{Privacy Analysis} \label{sec:privacy}
First we prove that \algoref~is $\eps$-DP.

\begin{lemma}[Differential Privacy]
\label{lem:dp}
If $\cal A$ is an $\frac{\eps}{\nlayer}$-DP mechanism for a single quantile then
\algoref is $\eps$-DP.
\end{lemma}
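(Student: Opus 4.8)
The plan is to present \algoref{} as an \emph{adaptive} composition of $\nlayer$ mechanisms, one per level of the recursion tree of Algorithm~\ref{alg:bf}, each $\frac{\eps}{\nlayer}$-DP as a function of the input database, and then apply the adaptive form of basic composition (Lemma~\ref{lem:comp}, used after conditioning on the transcript). As a preliminary I would bound the recursion depth: the procedure is called on $m$ quantiles and recurses on instances of sizes $\ceil{m/2}-1$ and $m-\ceil{m/2}=\floor{m/2}$ (both $\le\floor{m/2}$), stopping once $m\le 1$, so there are at most $\floor{\log m}+1\le\nlayer$ levels; combined with the fact that the data subsets $X_i^1,\dots,X_i^{2^{i-1}}$ at any fixed level are pairwise disjoint, this confirms that every point of $X$ is handed to $\calA$ in at most $\nlayer$ of the subproblems (at most one per level).

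The key structural point is that, apart from the interval endpoints, nothing in the algorithm is adaptive. The quantiles $Q_i=(q_i^1,\dots,q_i^{2^{i-1}})$ queried at level $i$ are a deterministic function of the input vector $Q$ alone: each recursive $Q_\ell,Q_u$ is obtained from the current quantiles by the fixed rescalings by $p$ and $1-p$ with $p=q_{\ceil{m/2}}$ an input quantile, never depending on $X$ or on any $v$. Only the intervals $I_i^1,\dots,I_i^{2^{i-1}}$ processed at level $i$ depend on the earlier outputs $V_1,\dots,V_{i-1}$, and they do so deterministically. I would therefore define, for each level $i$, a mechanism $M_i$ taking the database $X$ together with the transcript $V_{<i}=(V_1,\dots,V_{i-1})$, reconstructing the $I_i^j$ and $q_i^j$ from $V_{<i}$ and $Q$, and outputting $V_i=(v_i^1,\dots,v_i^{2^{i-1}})$ with $v_i^j=\calA(I_i^j,\,X\cap I_i^j,\,q_i^j)$ computed using independent randomness. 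Running \algoref{} is exactly running $M_1,\dots,M_{\nlayer}$ adaptively, feeding the transcript forward.

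Next I would show that $M_i(\cdot\,;V_{<i})$ is $\frac{\eps}{\nlayer}$-DP for every fixed $V_{<i}$. Given neighbors $X\sim X'$ with, say, $X'=X\cup\{x^*\}$: since the $I_i^j$ are disjoint, $x^*$ lies in at most one of them, say $I_i^{j_0}$, so $X\cap I_i^{j_0}$ and $X'\cap I_i^{j_0}$ are neighbors while $X\cap I_i^j=X'\cap I_i^j$ for all $j\ne j_0$. The coordinates of $M_i$ use independent randomness and only coordinate $j_0$ sees a changed input, so a routine product-measure argument reduces the $\frac{\eps}{\nlayer}$-DP of the tuple $M_i$ to the assumed $\frac{\eps}{\nlayer}$-DP of the single call $\calA(I_i^{j_0},\cdot,q_i^{j_0})$ on $X\cap I_i^{j_0}\sim X'\cap I_i^{j_0}$. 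Adaptive basic composition across the $\nlayer$ levels then gives that \algoref{} is $\nlayer\cdot\frac{\eps}{\nlayer}=\eps$-DP.

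The step I expect to require the most care is the adaptivity itself: the partition of $X$ used at level $i$ is determined by the \emph{random} outputs of the earlier levels, so one cannot directly invoke an off-the-shelf parallel-composition statement over a fixed partition. The resolution is the observation above --- that this dependence is channeled entirely through the transcript $V_{<i}$ (and the fixed input $Q$), never through fresh access to $X$ --- which is what makes it legitimate to encapsulate each level as a single mechanism and then compose adaptively; the per-level $\frac{\eps}{\nlayer}$-DP bound, uniform over transcripts, is the comparatively routine parallel-composition step.
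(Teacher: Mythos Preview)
Your proposal is correct and follows essentially the same approach as the paper: decompose the algorithm into one mechanism per recursion level, show each level is $\frac{\eps}{\nlayer}$-DP using the disjointness of the $X_i^j$ at that level (so the extra point $x^*$ affects only one call to $\calA$), and then compose across the $\nlayer$ levels. Your treatment is in fact more careful than the paper's about the adaptivity---you explicitly condition on the transcript $V_{<i}$ and note that the partition is transcript-determined---whereas the paper leaves this implicit.
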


\begin{proof}
It suffices to show that
for each level $1\le i\le \nlayer$ the output $V_i$ 
 is $\frac{\eps}{\nlayer}$-DP, 
 since the number of levels is $\nlayer$, from composition (Lemma~\ref{lem:comp}) we get that the \algoref~satisfies $(\eps, \delta)$-DP. 
 
 Let $X$ and $X' = X \cup \{x'\}$ be neighboring databases, mark as ${X'}_{i}^{k}$ the part of level $i$ that contains $x'$ among ${X'}_{i}^{j}$, $1\le j \le 2^{i-1}$ (as explained above, only one ${X'}_{i}^{j}$ contains $x'$). For each $j\neq k$ the data $X_{i}^{j}$ equals ${X'}_{i}^{j}$ and therefore the probability of the output $v_{i}^{j}$ is the same under $X$ or $X'$. The output $v_{i}^{k}$ is obtained by $\calA$ which is a $\frac{\eps}{\nlayer}$-DP mechanism, therefore it satisfies $\frac{\eps}{\nlayer}$-DP.

\end{proof}

\subsection{Utility Analysis}\label{sec:math_error}
A $q_i$-quantile is any point $o_i\in (a,b)$ such that the number points of $X$ which are in $(a,o_i)$ is $\lfloor q_i n \rfloor$.
We also define the {\em gap},
 $\mathrm{Gap}_{X}(d_1, d_2)$, between $d_1, d_2 \in (a,b)$ with respect to $X$ are the number of points in the data $X$ that fall between $d_1$ and $d_2$, formally:
\[
\mathrm{Gap}_{X}(d_1, d_2) = |\{x \in X \mid  x \in [\min\{d_1, d_2\} , \max\{d_1, d_2\})\}|.
\]
Using this notion we  define the error of the algorithm. Given dataset $X$, quantiles $Q=(q_1, \dots, q_m)$, solution $V = (v_1, \dots , v_m)$ and  true quantiles $O = (o_1, \dots , o_m)$, the {\em maximum missed points error} is defined as:
\[
\mathrm{Err}_X(O,V) = \max_{j \in [m]} \{ \mathrm{Gap}(o_j, v_j) \}= \max_{j \in [m]} \{ ||\{x \in X | x < v_j\}| - \lfloor q_j \cdot n \rfloor | \}.
\]

This error was first defined by \cite{smith2011privacy} and is widely used in the literature on the differentially private quantiles problem.
We first analyze the error of  our algorithm in the general case, without any constraints on the given quantiles.

\begin{lemma}[General Quantiles Utility]
\label{lem:gqu}
Let $X\in (a,b)^n$ be a database, and 
let $\calA$ be an algorithm that computes an approximation
$v$ for a single quantile $q$ of $X$ such that 
 \[
 \Pr[\mathrm{Gap}_X(o, v) > \gamma] \leq  \frac{\beta}{m}.
 \]
 for some constants $\beta, \gamma > 0$, where
 $o$ is a true $q$-quantile. 
We run 
 \algoref\ using $\calA$  on a database $X$, and quantiles
 $Q = (q_1, \dots , q_m)$. Then, with probability $1 - \beta$, we get approximate quantiles $V = (v_1, \dots, v_m)$ such that $\mathrm{Err}_X(O, V) \leq (\nlayer) \gamma$.
\end{lemma}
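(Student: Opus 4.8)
The plan is to bound the error contributed along the unique root-to-leaf path in the recursion tree that leads to the subproblem where $v_j$ is ultimately returned, and then take a union bound over all $\nlayer$ levels. First I would fix a quantile index $j\in[m]$ and trace the sequence of subproblems $X_1^{1}\supseteq X_2^{k_2}\supseteq\cdots$ that contain (in the appropriate normalized sense) the target quantile $q_j$, ending at the level where $q_j$ becomes the ``middle'' quantile and is answered by a direct call to $\calA$. At each node on this path the algorithm invokes $\calA$ once on the current dataset $X_i^{k_i}$ for some normalized quantile; by the hypothesis on $\calA$, the event that this call misses its (local) true quantile by more than $\gamma$ points of $X_i^{k_i}$ has probability at most $\beta/m$.

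The key structural observation I would make precise is that a miss of $\gamma$ points at an internal node translates into a misplacement of the split point $v_{i}^{k_i}$ that shifts the boundary between $X_{i+1}^{2k_i-1}$ and $X_{i+1}^{2k_i}$ by at most $\gamma$ points of $X$, and hence the ``true quantile'' of the child subproblem, measured back in the original dataset $X$, moves by at most $\gamma$ points relative to where it would have been. Summing these displacements down the path of length at most $\nlayer$ gives that the final returned value $v_j$ satisfies $\mathrm{Gap}_X(o_j,v_j)\le (\nlayer)\gamma$, provided every call along the path succeeded. I would want to state this as: the error at $v_j$ is at most the sum over the $\le \nlayer$ ancestors (including the node answering $q_j$) of the per-call gap, each of which is $\le\gamma$ on the success event.

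To finish, I would union bound. There are at most $\nlayer$ levels, and at each level exactly one subproblem lies on the path for our fixed $j$, so the path involves at most $\nlayer$ calls to $\calA$; but rather than union bounding per $j$ (which would cost a factor $m$ of paths times $\nlayer$ levels), the cleaner route is to note that the whole tree makes fewer than $2m$ calls to $\calA$ total across all levels — or more simply, at each of the $\nlayer$ levels the total number of data points is at most $n$ and the relevant guarantee for $v_j$ only depends on the $\le \nlayer$ calls on its path. Either way, a union bound over all calls that can affect any $v_j$, each failing with probability $\le \beta/m$, and there being at most $m$ such ``final'' calls plus their ancestors, yields overall failure probability at most $\beta$. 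I would phrase the union bound over the (at most $m$) leaf-answering calls and their ancestors, noting each distinct call is counted once and there are at most $m\cdot$(something absorbed) — the factor is arranged precisely so the $\beta/m$ per-call bound multiplies to $\beta$.

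The main obstacle I anticipate is the bookkeeping in the inductive/telescoping step: one has to carefully relate the gap measured in the local (shrunken, renormalized) dataset $X_i^{k_i}$ at an internal node to a gap measured in the global dataset $X$, and check that misplacing a split point by $\gamma$ local points indeed perturbs the child's target by at most $\gamma$ global points — using that $X_i^{k_i}\subseteq X$ so local gaps are bounded by global gaps, and that the floor operations in the definition of a true quantile don't introduce extra slack. The privacy side is already handled by Lemma~\ref{lem:dp}, so this is purely a utility/accounting argument; getting the union-bound factor to come out to exactly $\beta$ (rather than $\beta$ times a logarithmic or linear factor) is the one place where the $\beta/m$ hypothesis on $\calA$ must be spent exactly right.
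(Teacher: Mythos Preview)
Your plan is essentially the paper's proof: condition on the event that every call to $\calA$ succeeds (gap $\le\gamma$), then argue inductively down each root-to-leaf path that the displacement between the local true quantile $\hat o_i^j$ and the global one $o_i^j$ grows by at most $\gamma$ per level, giving $\mathrm{Gap}_X(v_i^j,o_i^j)\le i\gamma\le(\nlayer)\gamma$.

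The one place you are overcomplicating things is the union bound. You don't need to count paths, ancestors, or worry about overlap: the recursion tree has exactly one node per output quantile, so $\calA$ is invoked exactly $m$ times in total. A union bound over these $m$ calls, each failing with probability at most $\beta/m$, immediately gives failure probability at most $\beta$. The paper states this in one line (``the \algoref\ applies $\calA$ at most $m$ times''); your ``fewer than $2m$'' remark was already on the right track, and the ``$m\cdot(\text{something absorbed})$'' route is unnecessary. Also note that the local-vs-global gap worry you flag is a non-issue here: since $X_i^j$ consists of \emph{all} points of $X$ in the current sub-interval and both $v_i^j$ and $\hat o_i^j$ lie in that sub-interval, $\mathrm{Gap}_{X_i^j}(\hat o_i^j,v_i^j)=\mathrm{Gap}_X(\hat o_i^j,v_i^j)$ exactly.
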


\begin{proof}
For the computation of $m$ 
quantiles the \algoref\
applies  $\calA$
at most $m$ times (once per each internal node of the recursion tree).
Since in 
each run 
$\calA$ has error  at most $\gamma$ with probability $1 - \beta/m$ it follows by a 
 union bound that:

\begin{equation} \label{eq:1}
\Pr[\exists i,j  \text{ s.t.\ } \mathrm{Gap}_{X}(\hat{o}_i^j, v_i^j) > \gamma] \leq  \beta,
\end{equation}
where $\hat{o}_i^j$ is a true $q_i^j$-quantile with respect to the dataset $X_i^j$. We also denote by $o_i^j$ a true  $q_k$-quantile 
with respect to $X$ where $q_k$
is the original fraction in $Q$ that corresponds to 
 $q_i^j$ (see Figure~\ref{fig:ranges}). 

At the first level $(i=1)$ of the recursive tree we compute one quantile $q_1^1$ on the data $X = X_1^1$ and therefore $ \mathrm{Gap}_X(\hat{o}_1^1, o_1^1) = 0$. At the second leval $(i=2)$, we split the data $X$ according to $v_1^1$ into $X_2^1, X_2^2$, since $\mathrm{Gap}_X(v_1^1, o_1^1) \leq \gamma$, the
$q_2^j$-quantile
 $\hat{o}_2^j$  of the dataset $X_2^j$ satisfies that $\mathrm{Gap}_{X}(\hat{o}_2^j, o_2^j) \leq \gamma$ (see Figure~\ref{fig:ranges}). By induction, at layer $i$, for  every $j$ we have that $\mathrm{Gap}_{X}(\hat{o}_i^j, o_i^j) \leq (i - 1) \cdot \gamma$. Combining this with Equation~(\ref{eq:1}) results in $\mathrm{Gap}_{X}(v_i^j, o_i^j)  \leq i \cdot \gamma$. At the last level $(i = \nlayer)$
 we have that $\mathrm{Gap}_{X}(v_i^j, o_i^j) \leq  (\nlayer) \cdot \gamma$.
\end{proof}

\begin{theorem}
\label{thm:gen}
Assume that we implement $\calA$ using the exponential mechanism with privacy parameter $\frac{\eps}{\nlayer}$,  as described in Appendix~\ref{app:exp_dp} to solve the single quantile problem. Then, given a database $X \in (a,b)^n$ and quantiles $Q = (q_1, \dots , q_m)$ the \algoref~is $\eps$-DP, and with probability  $1 - \beta$ output $V = (v_1, \dots, v_m)$ that satisfies
\[
\mathrm{Err}_X(O, V) \leq 2(\nlayer) \cdot \frac{\log{\psi} +\log{m} - \log{\beta}}{\eps} = O\left(\frac{\log{m} (\log{\psi} + \log m + \log{\frac{1}{\beta}})}{\eps}\right),
\]
where $\psi = \frac{b-a}{\min_{k \in [n+1]} \{x_k - x_{k-1} \}}$.\footnote{We define $x_0=a$ and $x_{n+1}=b$. These are not real data points.}
\end{theorem}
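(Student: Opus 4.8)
The plan is to stitch together two facts already available --- the $\eps$-DP guarantee of Lemma~\ref{lem:dp} and the utility reduction of Lemma~\ref{lem:gqu} --- by supplying the one missing quantitative ingredient, a tail bound for the single-quantile exponential mechanism of Appendix~\ref{app:exp_dp}. The privacy half needs almost nothing: the exponential-mechanism instantiation of $\calA$ with privacy parameter $\frac{\eps}{\nlayer}$ is $\frac{\eps}{\nlayer}$-DP (Lemma~\ref{lem:exp_dp}, or its interval version Lemma~\ref{lem:expu}), hence an admissible subroutine for Lemma~\ref{lem:dp}, and the $\eps$-DP conclusion carries over verbatim.

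For the utility bound the strategy is to determine the value of $\gamma$ that Lemma~\ref{lem:gqu} takes as input. I would apply the interval exponential-mechanism guarantee to the quantile utility $u(X,v)=-\mathrm{Gap}_X(o,v)$, whose $L_1$ sensitivity is $1$ since adding or removing one point of $X$ changes the number of points below a fixed $v$ by at most one. The point demanding care is the constant that in the continuous setting plays the role of $|S|$ in Lemma~\ref{lem:exp_dp}: it is the ratio of the domain length $b-a$ to the length of the set of optimal-utility outputs, and that set is the inter-point interval containing a true $q$-quantile, whose length is at least $\min_{k\in[n+1]}\{x_k-x_{k-1}\}$ --- so the ratio is exactly $\psi$. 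This produces, for a single call to $\calA$, a tail bound of the shape $\Pr[\mathrm{Gap}_X(o,v)>\gamma]\le\psi\exp\!\big(-\tfrac{\eps\,\gamma}{2(\nlayer)}\big)$.

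It then remains to pick $\gamma$ making this probability at most $\beta/m$ --- i.e.\ $\gamma$ of order $\frac{\nlayer}{\eps}\big(\log\psi+\log m-\log\beta\big)$ --- and to substitute it into Lemma~\ref{lem:gqu}, whose conclusion is that with probability $1-\beta$ one has $\mathrm{Err}_X(O,V)\le(\nlayer)\,\gamma$. Carrying out this substitution and rewriting $\log\tfrac{\psi m}{\beta}$ as $\log\psi+\log m-\log\beta$ and $\nlayer$ as $O(\log m)$ delivers the bound in the theorem and its $O(\cdot)$ restatement.

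The step I expect to be the genuine obstacle is the middle one: securing the interval exponential-mechanism tail bound with the honest constant $\psi$. Two subtleties need attention. First, one must argue that in every recursive subproblem the optimal-utility interval still has length at least $\min_k\{x_k-x_{k-1}\}$ even though a subproblem's domain endpoints are the previously computed noisy estimates $v_i^j$ rather than true data points --- this is precisely where the standing no-duplicates assumption (enforced by small perturbations) and the dependence of the guarantee on the minimum inter-point distance enter. Second, one must verify that $\calA$ is invoked at most $m$ times, so that the union bound behind Lemma~\ref{lem:gqu} only raises the failure probability to $\beta$ and costs an additive $\log m$ inside the logarithm: each of the $m$ requested quantiles is estimated at exactly one node of the recursion tree, and the tree has depth at most $\nlayer$ because each level roughly halves the number of quantiles. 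Everything beyond these two points is routine bookkeeping.
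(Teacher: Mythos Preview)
Your proposal is correct and follows the same three-step structure as the paper's own proof: invoke Lemma~\ref{lem:expu} with privacy parameter $\tfrac{\eps}{\nlayer}$ and failure probability $\tfrac{\beta}{m}$ to obtain the per-call $\gamma$, feed that $\gamma$ into Lemma~\ref{lem:gqu}, and combine with Lemma~\ref{lem:dp} for the $\eps$-DP claim. Your explicit attention to why the global $\psi$ controls each recursive subproblem and to the at-most-$m$ invocations of $\calA$ actually goes beyond the paper's proof, which simply chains the three lemmas in two lines without addressing either subtlety.
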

\begin{proof}
By Lemma~\ref{lem:expu},
 the exponential mechanism with privacy parameter $\frac{\eps}{\nlayer}$ 
has an error more than $2\frac{\log{\psi} +\log{m} - \log{\beta}}{\eps}$ with probability at most $\frac{\beta}{m}$. 
Therefore, by Lemma~\ref{lem:gqu}, the \algoref~has an error no larger than $\gamma=2 (\nlayer) \cdot \frac{\log{\psi} +\log{m} - \log{\beta}}{\eps}$ with probability $1-\beta$.
Combining this with 
Lemma~\ref{lem:dp} the theorem follows.
\end{proof}

\usepgfplotslibrary{groupplots}
\begin{figure}
    \centering
    \caption{An error in computing $v_1^1$ by at most $\gamma = 2$ points from the optimal value $o_1^1$, might cause an error of at most  $\gamma$ points between the value of $o_j^2$ compared to $\hat{o}_j^2$. The example demonstrates the computed quantiles $q_1 = \frac{1}{4}$, $q_2 = \frac{1}{2}$ and $q_3 =\frac{3}{4}$. Note that $q_2^1$, $q_2^2 = \frac{1}{2}$.}
    \label{fig:ranges}

\begin{tikzpicture}
  \begin{axis}[name=plot1,height=2.5cm, width=12cm,
    xmin=0, xmax=22,
    axis x line=bottom,
    hide y axis,    
    ymin= 0, ymax=2,
    scatter/classes={a={mark=o,draw=black}}]
        \node[] at (axis cs: 1,0.45) {$X$};
        \addplot[scatter,only marks, mark size = 2pt, fill = red, scatter src=explicit symbolic] 
            coordinates {(2, 0)  (2.5, 0) (3, 0) (4.5, 0) (7.5, 0) (8, 0) (10, 0) (11, 0) (12, 0) (13, 0) (14, 0) (16, 0) (18, 0) (19, 0) (19.5, 0) (20, 0)};
         \addplot+[black, line width=1.5pt,no marks] coordinates {(9, 0) (9, 0.5)} node[pos=1,above] {\small $v_1^1$};
         
          \addplot+[blue, line width=1.5pt,no marks] coordinates {(11.5, 0) (11.5, 0.5)} node[pos=1,above] {\small $o_1^1 = \hat{o}_1^1$};
          
            \addplot+[blue, line width=1.5pt,no marks] coordinates {(17, 0) (17, 0.5)} node[pos=1,above] {\small $o_2^2$};
        
            \addplot+[blue, line width=1.5pt,no marks] coordinates {(6.5, 0) (6.5, 0.5)} node[pos=1,above] {\small $o_2^1$};
\end{axis} 

\begin{axis}[name=plot2,at={($(plot1.west) + (0,-2cm)$)}, anchor=west, height=2.5cm, width=6cm,
    xmin=0, xmax=10,
    axis x line=bottom,
    hide y axis,    
    ymin= 0, ymax=2,
    scatter/classes={a={mark=o,draw=black}}]
            \node[] at (axis cs: 1,0.45) {$X_1^2$};
        \addplot[scatter,only marks, mark size = 2pt, fill = red, scatter src=explicit symbolic] 
            coordinates {(2, 0)  (2.5, 0) (3, 0) (4.5, 0) (7.5, 0) (8, 0)};
         \addplot+[black,line width=1.5pt,no marks] coordinates {(9, 0) (9, 0.5)} node[pos=1,above] {\small $v_1^1$};
         
         \addplot+[blue, line width=1.5pt,no marks] coordinates {(4, 0) (4, 0.5)} node[pos=1,above] {\small ${\hat{o}}_2^1$};
         
          \addplot+[blue, line width=1.5pt,no marks] coordinates {(6.5, 0) (6.5, 0.5)} node[pos=1,above] {\small $o_2^1$};
\end{axis}

\begin{axis}[name=plot3,at={($(plot1.west) + (6cm,-2cm)$)}, anchor=west, height=2.5cm, width=6cm,
    xmin=6, xmax=22,
    axis x line=bottom,
    hide y axis,    
    ymin= 0, ymax=2,
    scatter/classes={a={mark=o,draw=black}}]
      \node[] at (axis cs: 7,0.45) {$X_2^2$};
      
        \addplot[scatter,only marks, mark size = 2pt, fill = red, scatter src=explicit symbolic] 
            coordinates {(10, 0) (11, 0) (12, 0) (13, 0) (14, 0) (16, 0) (18, 0) (19, 0) (19.5, 0) (20, 0)};
         \addplot+[black, line width=1.5pt,no marks] coordinates {(9, 0) (9, 0.5)} node[pos=1,above] {\small $v_1^1$};
         
         \addplot+[blue, line width=1.5pt,no marks] coordinates {(15, 0) (15, 0.5)} node[pos=1,above] {\small $\hat{o}_2^2$};
         
          \addplot+[blue, line width=1.5pt,no marks] coordinates {(17, 0) (17, 0.5)} node[pos=1,above] {\small $o_2^2$};
\end{axis}

\end{tikzpicture}
\end{figure}
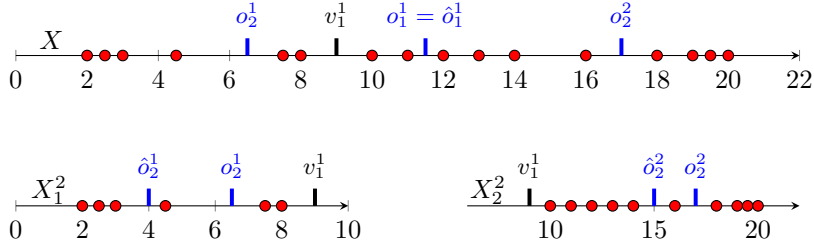

The uniform quantiles problem is a common instance of the quantiles problem where $q_i = \frac{i}{m + 1}$,
for $i=1,\ldots,m$. Lemma~\ref{lem:uqu} shows that when the desired quantiles are uniform it is possible to guarantee
that $\mathrm{Err}_X(O, V) = O(\gamma)$ with probability
 $1 - \beta$.

\begin{lemma}[Uniform Quantiles Utility]
\label{lem:uqu}

Let $X\in (a,b)^n$ be a database, and 
let $\calA$ be an algorithm that computes an approximation
$v$ for a single quantile $q$ of $X$ such that 
 \[
 \Pr[\mathrm{Gap}_X(o, v) > \gamma] \leq  \frac{\beta}{m}.
 \]
 for some constants $\beta, \gamma > 0$, where
 $o$ is a true $q$-quantile. 
We
 run \algoref\ using $\calA$  on a database $X$, and quantiles $Q = (q_1, \dots , q_m)$ where $q_i = \frac{i}{m + 1}$. Then,
with probability  $1 - \beta$, the \algoref~returns approximate quantiles $V = (v_1, \dots, v_m)$ satisfying $\mathrm{Err}_X(O, V) \le 2\gamma$.
\end{lemma}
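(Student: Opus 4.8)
The plan is to first isolate the structural feature that makes uniform quantiles easy, namely that \emph{the recursion preserves uniformity}. If $q_i=i/(m+1)$ and $\hat m=\ceil{m/2}$, then $p=q_{\hat m}=\hat m/(m+1)$, so the left child receives $Q_\ell=(q_1,\dots,q_{\hat m-1})/p=(1/\hat m,\dots,(\hat m-1)/\hat m)$, which is exactly the uniform-quantiles instance with $\hat m-1$ quantiles; a symmetric computation gives $Q_u=(1/(m-\hat m+1),\dots,(m-\hat m)/(m-\hat m+1))$, the uniform instance with $m-\hat m$ quantiles. Hence, by induction down the recursion tree, \emph{every} node poses a uniform-quantiles subproblem on its (contiguous) sub-database.

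Granting this, I would track, for each node, the \emph{signed rank error} of its estimate. Consider a node whose sub-database $Y$ occupies ranks $L+1,\dots,R$ of $X$ (so $|Y|=R-L$) and that computes the $\lambda$-quantile of $Y$ with $\lambda=\ceil{s/2}/(s+1)$, where $s$ is the number of quantiles handled at that node. The true $\lambda$-quantile of $Y$ has $X$-rank $L+\lambda|Y|$, whereas the original quantile it approximates has $X$-rank $L^\ast+\lambda|Y^\ast|$, where $L^\ast,R^\ast$ are the \emph{ideal} block endpoints (exact quantile ranks of $X$) and $|Y^\ast|=R^\ast-L^\ast$; this last identity is precisely where uniformity is used. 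Subtracting, the structural drift equals $(1-\lambda)(L-L^\ast)+\lambda(R-R^\ast)$, a convex combination of the two boundary errors, each of which is the signed rank error of the (strictly shallower) ancestor that produced that boundary, or $0$ for the global endpoints $a$ and $b$. Adding the fresh error of $\calA$ at the node gives, on the good event that all at most $m$ invocations of $\calA$ succeed, the recursion
\[
\delta_{\mathrm{node}}=(1-\lambda)\,\delta_{\mathrm{LBA}}+\lambda\,\delta_{\mathrm{RBA}}+\eta_{\mathrm{node}},\qquad |\eta_{\mathrm{node}}|\le\gamma
\]
where $\mathrm{LBA}$ and $\mathrm{RBA}$ are the left/right boundary ancestors. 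Once a uniform bound $|\delta_{\mathrm{node}}|\le 2\gamma$ (equivalently $\mathrm{Gap}_X(v_i^j,o_i^j)\le 2\gamma$ for all $i,j$) is in hand, the lemma is immediate: by the hypothesis on $\calA$ and a union bound over the $\le m$ applications the good event has probability $\ge 1-\beta$, exactly as in the proof of Lemma~\ref{lem:gqu}, and on it $\mathrm{Err}_X(O,V)=\max_j\mathrm{Gap}_X(o_j,v_j)\le 2\gamma$.

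The heart of the proof, and the step I expect to be the main obstacle, is solving the above recursion to get the constant bound $2\gamma$. The crude estimate $|\delta_{\mathrm{node}}|\le\max\{|\delta_{\mathrm{LBA}}|,|\delta_{\mathrm{RBA}}|\}+\gamma$ only gives growth of $\gamma$ per level, which is exactly the $(\log m+1)\gamma$ bound of Lemma~\ref{lem:gqu}; the gain must come from the fact that uniformity forces $\lambda=\ceil{s/2}/(s+1)\le 1/2$ (with complement $\ge 1/2$), so that when one unrolls the recursion along the root-to-node path the \emph{inherited} part of the error is multiplied by a factor $\le 1/2$ at each step while the newly injected $\calA$-errors accumulate only as a geometric series $\gamma(1+\tfrac12+\tfrac14+\cdots)$. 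The subtlety is that the two boundary ancestors sit at different depths and the signs attached to their errors are not arbitrary but are forced by the sequence of left/right turns on the path; the bookkeeping has to be arranged (for instance by charging each $\calA$-error to the chain of descendants that reuse it as a boundary and summing the resulting geometrically decaying coefficients) so that this cancellation is exposed rather than hidden behind a term linear in the depth.
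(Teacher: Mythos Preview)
Your recurrence
\[
\delta_{\mathrm{node}}=(1-\lambda)\,\delta_{\mathrm{LBA}}+\lambda\,\delta_{\mathrm{RBA}}+\eta_{\mathrm{node}},\qquad |\eta_{\mathrm{node}}|\le\gamma,
\]
is correct and is in fact a cleaner formalisation than the paper's argument, which only tracks the size deviation $\bigl||X_i^j|-n/2^{i-1}\bigr|$ and tacitly identifies the structural drift of $\hat o_i^j$ with half of that quantity---an identification that holds only for the extreme nodes at each level, where one boundary is a global endpoint.

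The obstacle you isolate, however, cannot be overcome---by you or by the paper---because the stated $2\gamma$ bound is not true for the class of algorithms $\calA$ allowed by the hypothesis. In the case $m=2^k-1$ every $\lambda=\tfrac12$; the coefficients $c_a$ of the fresh errors $\eta_a$ in the unrolled expression for $\delta_v$ are all nonnegative, so there is no ``cancellation to expose''; and along the zigzag path $L,R,L,R,\dots$ from the root the total weight $S_i=\sum_a c_a$ obeys $S_i=1+\tfrac12(S_{i-1}+S_{i-2})$, whence $S_i=\tfrac{2}{3}i+O(1)$ rather than a constant. Concretely, for $m=7$ one computes $\delta_{3}^{2}=\tfrac34\eta_{1}^{1}+\tfrac12\eta_{2}^{1}+\eta_{3}^{2}$; an $\calA$ that deterministically returns a point exactly $\gamma$ ranks above its target satisfies the hypothesis trivially and yields $\mathrm{Gap}_X(v_3^2,o_3^2)=\tfrac{9}{4}\gamma>2\gamma$. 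The paper's inductive claim $\mathrm{Gap}_X(\hat o_i^j,o_i^j)\le\tfrac{2^{i-1}-1}{2^{i-1}}\gamma$ already fails at this same node (the drift there is $\tfrac54\gamma$, not $\tfrac34\gamma$); that claim, and your geometric-series heuristic, are valid only for the leftmost and rightmost positions at each level. What your recurrence actually delivers is $\mathrm{Err}_X(O,V)\le\bigl(\tfrac{2}{3}\log m+O(1)\bigr)\gamma$, a constant-factor improvement over Lemma~\ref{lem:gqu} but not the claimed $O(\gamma)$.
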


\begin{proof}
For simplicity we assume that $m = 2^k - 1$.
The proof is similar for 
other values of $m$. 
For this value of $m$ we have that 
$q_\ell = \ell/2^k$, $\ell=1,\ldots , 2^k-1$ and $q_i^j = 1/2$ for all
$i$ and $j$. Furthermore, we have that the number of points in $X\cap (a,o_i^j)$ is
$\lfloor \frac{2\cdot j-1}{2^i} n \rfloor$, $j=1,\ldots, 2^{i-1}$.

As in Lemma \ref{lem:gqu}, 
Equation (\ref{eq:1}) holds. So we assume in the rest of the proof that 
$\mathrm{Gap}_{X}(v_i^j, \hat{o}_i^j) \le \gamma$ for all $i$ and $j$.

For the root $(i=1)$ of the recursion tree we compute one quantile $q_1^1$ on the data $X = X_1^1$ and therefore $\mathrm{Gap}_{X}(\hat{o}_1^1, o_1^1) = 0$. At the second level, $(i=2)$, we split the data $X$ according to $v_1^1$ into $X_2^1$, and $X_2^2$. Since $\mathrm{Gap}_{X}(v_1^1, o_1^1) \leq \gamma$,
we have that
$\lfloor n/2 \rfloor -\gamma\le |X_2^j| \le \lfloor n/2 \rfloor + \gamma$. 
Recall that  $|X\cap (a,o_2^1)| = \lfloor n/4 \rfloor$ and 
$|X\cap (a,o_2^2)| = \lfloor 3n/4 \rfloor$ and therefore  $\mathrm{Gap}_{X}(\hat{o}_2^j, o_2^j) \leq \gamma/2$ for $j=1,2$.
Now, since $\mathrm{Gap}_{X}(v_2^j, \hat{o}_2^j) \le \gamma$ we get that 
$\lfloor n/4 \rfloor -3\gamma/2\le |X_3^j| \le \lfloor n/4 \rfloor + 3\gamma/2$,  and therefore 
$\mathrm{Gap}_{X}(\hat{o}_3^j, o_3^j) \leq 3 \gamma/4$,
for
$j\in [4]$. Similarly,  
since $\mathrm{Gap}_{X}(v_3^j, \hat{o}_3^j) \le \gamma$, it follows that 
$\lfloor n/8 \rfloor -7\gamma/4\le |X_4^j| \le \lfloor n/8 \rfloor + 7\gamma/4$ and therefore $\mathrm{Gap}_{X}(\hat{o}_4^j, o_4^j) \leq 7 \gamma/8$, for $j\in [8]$. We conclude that by induction on the levels we get that,
\[
\mathrm{Gap}_{X}(\hat{o}_i^j, o_i^j)  \leq \frac{2^{i-1} -1}{2^{i-1}} \cdot \gamma \leq \gamma,
\]
for all $i$ and $j$.
Combining this upper bound  with Equation~(\ref{eq:1}) we get that $\mathrm{Gap}_{X}(v_j^i, o_j^i)  \leq 2\gamma$.
\end{proof}

Theorem \ref{thm:uniform}
improves the bound of Theorem
\ref{thm:gen}
for uniform quantiles. We omit its  proof which is similar to the proof of Theorem \ref{thm:gen} using 
Lemma \ref{lem:uqu} instead of 
Lemma \ref{lem:gqu}.

\begin{theorem} \label{thm:uniform} If we
set $\calA$ to be the exponential mechanism with privacy parameter $\frac{\eps}{\nlayer}$,  as described in Appendix~\ref{app:exp_dp},  then given data $X \in (a,b)^n$ and quantiles $Q = (q_1, \dots , q_m)$, where $q_i = \frac{i}{m + 1}$, the \algoref~is $\eps$-DP and with probability  $1 - \beta$ output $V = (v_1, \dots, v_m)$ that satisfies
\[
\mathrm{Err}_X(O, V)  = O\left(\frac{ \log{\psi} + \log m + \log{\frac{1}{\beta}}}{\eps}\right),
\]
where $\psi = \frac{b-a}{\min_{k \in [n+1]} \{x_k - x_{k-1} \}}$.
\end{theorem}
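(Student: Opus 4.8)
The plan is to mirror the proof of Theorem~\ref{thm:gen} almost verbatim, replacing the appeal to Lemma~\ref{lem:gqu} by an appeal to Lemma~\ref{lem:uqu}. Privacy is immediate and orthogonal to the utility argument: since $\calA$ is instantiated as the exponential mechanism run with privacy parameter $\frac{\eps}{\nlayer}$, Lemma~\ref{lem:dp} already gives that the \algoref~is $\eps$-DP.

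For utility, I would first quantify the error of a single call to $\calA$. By Lemma~\ref{lem:expu}, the exponential mechanism with privacy parameter $\frac{\eps}{\nlayer}$ and target failure probability $\frac{\beta}{m}$ produces, for a true $q$-quantile $o$, an estimate $v$ with $\Pr[\mathrm{Gap}_X(o,v) > \gamma] \le \frac{\beta}{m}$, where $\gamma = 2\,\frac{\log\psi + \log m - \log\beta}{\eps}$ (the $\log m$ term absorbs the union-bound boost from confidence $1-\beta$ to $1-\beta/m$, the $\log\psi$ term is the effective cardinality of the solution interval $(a,b)$, and the $L_1$ sensitivity of the quantile utility is $1$). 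This $\gamma$ is exactly the hypothesis required by Lemma~\ref{lem:uqu}.

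Now I would feed this $\gamma$ into Lemma~\ref{lem:uqu}: because the requested quantiles are the uniform ones $q_i = i/(m+1)$, that lemma yields that with probability $1-\beta$ the returned $V$ satisfies $\mathrm{Err}_X(O,V) \le 2\gamma = 4\,\frac{\log\psi + \log m - \log\beta}{\eps} = O\!\left(\frac{\log\psi + \log m + \log\frac1\beta}{\eps}\right)$. Combining this with the $\eps$-DP guarantee from Lemma~\ref{lem:dp} completes the proof. There is no genuine obstacle in this argument itself; all the work already sits in Lemma~\ref{lem:uqu}, whose inductive claim is that for uniform quantiles the discrepancy $\mathrm{Gap}_X(\hat o_i^j, o_i^j)$ between the true quantile of the sub-database $X_i^j$ and the true quantile of $X$ stays below $\frac{2^{i-1}-1}{2^{i-1}}\gamma \le \gamma$, instead of accumulating to $(i-1)\gamma$ as in the general case, since each recursion step halves both the target quantile and the inherited error. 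The one point I would still want to check is that the reduction from a general $m$ to the convenient case $m = 2^k-1$ (only sketched inside Lemma~\ref{lem:uqu}) costs at most a constant factor --- e.g.\ by rounding $m$ up to the next value of the form $2^k-1$, which perturbs $\log m$ by at most $1$ and hence does not affect the asymptotic bound.
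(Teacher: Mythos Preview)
Your proposal is exactly what the paper does: it omits the proof of Theorem~\ref{thm:uniform} and states that the argument is identical to that of Theorem~\ref{thm:gen} with Lemma~\ref{lem:uqu} substituted for Lemma~\ref{lem:gqu}, which is precisely the substitution you carry out (privacy from Lemma~\ref{lem:dp}, per-call error from Lemma~\ref{lem:expu}, then the uniform-quantile aggregation). Your closing remark about reducing general $m$ to $m=2^k-1$ goes slightly beyond what the paper spells out, but is a reasonable sanity check and does not deviate from the paper's approach.
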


\paragraph{Quantiles sanitization.}
Given a database $X=(x_1,.., x_n)\in (a,b)^n$,
we can produce a differentially private 
dataset $\hat{X}=(\hat{x}_1, \dots, \hat{x}_n)\in (a,b)^n$, such that for each 
point $\hat{x}_\ell$, the number of points in
$\hat{X}$ that are smaller than $\hat{x}_\ell$ is similar to the number of points in $X$ that are smaller than $\hat{x}_\ell$.
This is specified precisely in 
 the following Corollary of
Theorem \ref{thm:gen}. 
In particular this corollary implies that for every interval $I\subseteq (a,b)$, $|X\cap I|$ is approximately equal to
$|\hat{X} \cap I|$.

\begin{corollary} \label{thm:san}
\label{thm:gen}
Assume that we implement $\calA$ using the exponential mechanism with privacy parameter $\frac{\eps}{\log n + 1}$,  as described in Appendix~\ref{app:exp_dp}, to solve the single quantile problem. Then, given a database $X \in (a,b)^n$ and $n$ quantiles $Q = (q_1, \dots , q_n)$, where $q_i=i/n$, the \algoref~is $\eps$-DP, and with probability  $1 - \beta$ output $\hat{X} = (\hat{x}_1, \dots, \hat{x}_n)$ that satisfies
\[
\mathrm{Err}_X(X, \hat{X}) 
=
O\left(\frac{\log{n} (\log{\psi} + \log n + \log{\frac{1}{\beta}})}{\eps}\right).
\]
where $\psi = \frac{b-a}{\min_{k \in [n+1]} \{x_k - x_{k - 1} \}}$.
\end{corollary}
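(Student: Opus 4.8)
The plan is to derive this corollary almost directly from Theorem~\ref{thm:gen}, by specializing it to $m = n$ and $q_i = i/n$ and then re-expressing the resulting guarantee in terms of the data points themselves.

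First I would observe that with $m = n$ we have $\nlayer = \log n + 1$, so invoking $\calA$ with privacy parameter $\frac{\eps}{\log n + 1}$ is exactly the instantiation of the \algoref analyzed in Theorem~\ref{thm:gen}. The $\eps$-DP claim is therefore immediate from Lemma~\ref{lem:dp}, and Theorem~\ref{thm:gen} gives that, with probability at least $1-\beta$, the output $V = (v_1,\dots,v_n)$ --- which we rename $\hat X = (\hat x_1,\dots,\hat x_n)$ --- satisfies
\[
\mathrm{Err}_X(O,V) \;\le\; 2(\log n + 1)\cdot\frac{\log\psi + \log n - \log\beta}{\eps} \;=\; O\!\left(\frac{\log n\,(\log\psi + \log n + \log\tfrac1\beta)}{\eps}\right),
\]
where $O = (o_1,\dots,o_n)$ is a vector of true $(i/n)$-quantiles of $X$.

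The only actual work left is to replace the reference points $o_j$ by the data points $x_j$. Writing $F_X(t) = |\{x\in X : x < t\}|$ we have $\mathrm{Gap}_X(d_1,d_2) = |F_X(d_1)-F_X(d_2)|$, so $\mathrm{Gap}_X$ satisfies the triangle inequality; and since $o_j$ is a true $(j/n)$-quantile, $F_X(o_j) = \lfloor (j/n)\,n\rfloor = j$ whereas $F_X(x_j) = j-1$, so $\mathrm{Gap}_X(x_j,o_j) = 1$ for every $j$. Hence
\[
\mathrm{Err}_X(X,\hat X) = \max_j \mathrm{Gap}_X(x_j,\hat x_j) \;\le\; \max_j\big(\mathrm{Gap}_X(x_j,o_j) + \mathrm{Gap}_X(o_j,\hat x_j)\big) \;\le\; 1 + \mathrm{Err}_X(O,V),
\]
and the additive constant is absorbed into the $O(\cdot)$, which is the stated bound. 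I would also add a one-line check of the boundary index $j = n$, where $q_n = 1$ falls outside the usual $q_m < 1$ convention: a true $1$-quantile is any point of $(x_n,b)$, and inside the recursion the normalized quantile $1$ is always the largest quantile of whichever subproblem it appears in, so it is never selected as a split point $p$ (which is always a median-index quantile) and never triggers a division by $1-p=0$.

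I do not anticipate a genuine obstacle here: the corollary is a specialization of Theorem~\ref{thm:gen}, and the only mild subtlety is the bookkeeping that identifies the target quantiles $O$ with the data $X$, at the cost of the harmless $+1$ coming from the floor and half-open-interval conventions in the definitions of $\mathrm{Err}$ and of a true quantile. (If $\mathrm{Err}_X(X,\hat X)$ is instead read simply as $\mathrm{Err}_X(O,V)$ for $q_i = i/n$, even that $+1$ is unnecessary and the corollary is verbatim Theorem~\ref{thm:gen}.)
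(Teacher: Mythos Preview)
Your proposal is correct and matches the paper's approach: the paper presents this statement as an immediate corollary of Theorem~\ref{thm:gen} (with $m=n$) and gives no separate proof, so your derivation is exactly the intended one. Your additional bookkeeping --- the $+1$ from replacing the true $(i/n)$-quantiles $o_i$ by the data points $x_i$, and the observation about $q_n=1$ --- is more careful than the paper itself, which simply writes $\mathrm{Err}_X(X,\hat X)$ without commenting on either point.
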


\subsection{Zero-Concentrated Differential Privacy}
\label{sec:zcdp}
Zero Concentrated Differential Privacy (zCDP) \citep{bun2016concentrated} offers smoother  composition properties
than standard $(\eps,\delta)$-DP.
The general idea is to to compare the Rényi divergence of the privacy losses random variables for neighboring databases. We analyse our algorithm also under this definition of privacy. 
As in Section \ref{sec:privacy}, the
 privacy analysis of our algorithm applies composition of the processing of different levels in the recursion tree.  zCDP's composition theorem allows us to run the exponential mechanism
 at each level 
 with a higher privacy parameter, which results in  a tighter error bound for the exponential mechanism. For precise statements see Theorem~\ref{lem:gqu_zcdp} and Theorem~\ref{lem:uqu_zcdp} below. 
 In Section~\ref{sec:accuracy} we measure empirically the benefit the gain from this smoother composition of zCDP.

\begin{definition}[Zero-Concentrated Differential Privacy (zCDP) \citet{bun2016concentrated}]
An algorithm $M:\calD^n \to \mathbb{R}$ is $\rho$-zCDP if for all neighbouring $X,X' \in \calD^n$, and $\gamma \in (1, \infty)$ $\mathrm{RD}_{\gamma}(M(Z),M(Z')) \leq \rho\gamma$, where $\mathrm{RD}_{\gamma}$ is the $\gamma$-Rényi divergence between random variables A and B. ($\cal D$ is the domain of the database elements, in our case it is $(a,b)$.)
\end{definition}

\begin{lemma}\citep{bun2016concentrated}
\label{lem:cdp}
if algorithm $M$ satisfies $\eps$-DP, then $M$ satisfies $\rho$-zCDP with $\rho=\frac{\eps^2}{2}$.
\end{lemma}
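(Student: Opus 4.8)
The plan is to unwind the definition of $\rho$-zCDP into a statement about the privacy loss random variable and then invoke Hoeffding's lemma twice. Fix neighboring databases $X\sim X'$ and let $P$ and $Q$ denote the output distributions $M(X)$ and $M(X')$ (interpreted via densities / Radon--Nikodym derivatives, since the range is $\reals$). Pure $\eps$-DP with $\delta=0$ pins the likelihood ratio $\frac{dP}{dQ}$ inside $[e^{-\eps},e^{\eps}]$ everywhere, so the privacy loss $Z=\log\frac{dP}{dQ}$ satisfies $|Z|\le\eps$ pointwise; this bounded range is the only consequence of pure DP that the argument uses. Writing the R\'enyi divergence as a log-moment-generating function, $\mathrm{RD}_\gamma(P,Q)=\frac{1}{\gamma-1}\log\E_{s\sim P}[e^{(\gamma-1)Z}]$, it suffices to prove $\mathrm{RD}_\gamma(P,Q)\le \gamma\cdot\frac{\eps^2}{2}$ for every $\gamma>1$ and every pair of neighbors, which is exactly $\rho$-zCDP with $\rho=\eps^2/2$.

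The first step is to bound the KL divergence $\mathrm{KL}(P\|Q)=\E_{s\sim P}[Z]$. The key identity is $\E_{s\sim P}[e^{-Z}]=\E_{s\sim P}\!\left[\frac{dQ}{dP}\right]=1$. Applying Hoeffding's lemma to the random variable $-Z$, which lies in $[-\eps,\eps]$ and has mean $-\mathrm{KL}(P\|Q)$, yields $1=\E_{s\sim P}[e^{-Z}]\le \exp\!\left(-\mathrm{KL}(P\|Q)+\frac{(2\eps)^2}{8}\right)$, hence $\mathrm{KL}(P\|Q)\le \frac{\eps^2}{2}$.

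The second step controls the exponential moment directly. Applying Hoeffding's lemma once more, now to $Z\in[-\eps,\eps]$ with mean $\mathrm{KL}(P\|Q)$ and exponent $t=\gamma-1>0$, gives $\E_{s\sim P}[e^{(\gamma-1)Z}]\le \exp\!\left((\gamma-1)\mathrm{KL}(P\|Q)+\frac{(\gamma-1)^2\eps^2}{2}\right)$. Taking $\frac{1}{\gamma-1}\log$ of both sides and substituting the bound from the first step, $\mathrm{RD}_\gamma(P,Q)\le \mathrm{KL}(P\|Q)+\frac{(\gamma-1)\eps^2}{2}\le \frac{\eps^2}{2}+\frac{(\gamma-1)\eps^2}{2}=\gamma\cdot\frac{\eps^2}{2}$, as required.

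The only non-routine ingredient is the intermediate inequality $\mathrm{KL}(P\|Q)\le \eps^2/2$; the remaining work is bookkeeping with Hoeffding's lemma, plus a little care to justify the pointwise ratio bound and the moment identity in the continuous setting. It is worth noting that the argument genuinely relies on $\delta=0$, since it rests on an everywhere-bound on the likelihood ratio rather than a high-probability one. (This is Proposition~1.4 of \citet{bun2016concentrated}.)
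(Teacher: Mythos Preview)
The paper does not actually prove this lemma; it is stated as a citation to \citet{bun2016concentrated} and used as a black box, so there is no ``paper's own proof'' to compare against. Your argument is correct and is essentially the standard proof (indeed the one you cite as Proposition~1.4 of \citet{bun2016concentrated}): bound the privacy loss $Z=\log\frac{dP}{dQ}$ pointwise by $\eps$, use the identity $\E_P[e^{-Z}]=1$ with Hoeffding's lemma to get $\mathrm{KL}(P\|Q)\le \eps^2/2$, and apply Hoeffding again to control the $(\gamma-1)$-moment and conclude $\mathrm{RD}_\gamma(P,Q)\le \gamma\eps^2/2$. Both invocations of Hoeffding and the resulting arithmetic check out.
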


\begin{lemma}\citep{bun2016concentrated}
\label{lem:cdp_comp}
Let $M: \calD^n \to \calY$ and $M':  \calD^n \to \calZ$. suppose $M$ satisfies $\rho$-zCDP and $M'$ satisfies $\rho'$-zCDP. Define $M'':\calD^n\to \calZ$ by $M''(X) = M'(X, M(X))$. Then $M''$ satisfies $(\rho+\rho')$-zCDP.
\end{lemma}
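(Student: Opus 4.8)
The plan is to unwind the definition of $\rho$-zCDP in terms of R\'enyi divergence and combine two standard facts: the data-processing (post-processing) inequality for R\'enyi divergence, and the fact that under adaptive composition the R\'enyi divergences of the joint output laws add. Fix neighbouring databases $X\sim X'$ and an order $\gamma\in(1,\infty)$; it suffices to show $\mathrm{RD}_{\gamma}(M''(X)\,\|\,M''(X'))\le(\rho+\rho')\gamma$, and then conclude since $X,X',\gamma$ were arbitrary.

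First I would describe the joint law of $(M(X),M''(X))$: drawing $M''(X)$ amounts to first sampling $y\sim M(X)$ and then $z\sim M'(X,y)$, so the joint density factors as $p(y,z)=P_X(y)\,R_X(z\mid y)$, where $P_X$ is the law of $M(X)$ and $R_X(\cdot\mid y)$ is the law of $M'(X,y)$; similarly $q(y,z)=P_{X'}(y)\,R_{X'}(z\mid y)$. Using $e^{(\gamma-1)\mathrm{RD}_{\gamma}(p\|q)}=\int\!\!\int p(y,z)^{\gamma}q(y,z)^{1-\gamma}\,dz\,dy$ and the factorization, the density separates into $P_X(y)^{\gamma}P_{X'}(y)^{1-\gamma}\cdot R_X(z\mid y)^{\gamma}R_{X'}(z\mid y)^{1-\gamma}$, so the inner integral over $z$ equals $e^{(\gamma-1)\mathrm{RD}_{\gamma}(R_X(\cdot\mid y)\|R_{X'}(\cdot\mid y))}$. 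The key input here is that for every \emph{fixed} auxiliary value $y$ the mechanism $X\mapsto M'(X,y)$ is $\rho'$-zCDP, so this inner factor is at most $e^{(\gamma-1)\rho'\gamma}$; pulling it out leaves $\int P_X(y)^{\gamma}P_{X'}(y)^{1-\gamma}\,dy=e^{(\gamma-1)\mathrm{RD}_{\gamma}(P_X\|P_{X'})}\le e^{(\gamma-1)\rho\gamma}$ because $M$ is $\rho$-zCDP. Taking logarithms and dividing by $\gamma-1>0$ yields $\mathrm{RD}_{\gamma}(p\|q)\le(\rho+\rho')\gamma$.

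It remains to pass from the joint law to the marginal law of $M''(X)$. Since $M''(X)$ is the second coordinate of the pair $(M(X),M''(X))$ (obtained by discarding the first coordinate, a post-processing), the data-processing inequality for R\'enyi divergence gives $\mathrm{RD}_{\gamma}(M''(X)\|M''(X'))\le\mathrm{RD}_{\gamma}(p\|q)\le(\rho+\rho')\gamma$, which is exactly the $(\rho+\rho')$-zCDP condition.

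The main obstacle is conceptual rather than computational: one must pin down what ``$M'$ is $\rho'$-zCDP'' means for a mechanism that also consumes the random output of $M$, namely that the per-$y$ guarantee $\mathrm{RD}_{\gamma}(M'(X,y)\|M'(X',y))\le\rho'\gamma$ holds \emph{uniformly} over $y$, and to check that the adaptive choice of $y$ does not spoil this — this uniformity is exactly what licenses pulling $e^{(\gamma-1)\rho'\gamma}$ out of the outer integral. The two analytic ingredients (the data-processing inequality for R\'enyi divergence and Fubini's theorem for the order of integration) are routine; one may simply cite \citet{bun2016concentrated} for the former rather than reprove it.
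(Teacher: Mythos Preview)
The paper does not give its own proof of this lemma; it is stated as a citation from \citet{bun2016concentrated} and used as a black box. Your argument is correct and is essentially the standard proof found in that reference: factor the joint law, bound the inner R\'enyi integral uniformly in the auxiliary output using the $\rho'$-zCDP assumption on $M'(\cdot,y)$, bound the outer integral using the $\rho$-zCDP of $M$, and finish with the data-processing inequality. There is nothing to compare against in the present paper, and your write-up would serve perfectly well as a self-contained replacement for the bare citation.
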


\begin{theorem}[General Quantiles Utility with zCDP]
\label{lem:gqu_zcdp}
Suppose we implement $\calA$ using the exponential mechanism to solve the single quantile problem, with privacy parameter  $\eps=\sqrt{\frac{2\rho}{\nlayer}}$.
Then, given data $X$ and quantiles $Q = (q_1, \dots , q_m)$, the \algoref~is $\rho$-zCDP and with probability  $1 - \beta$ output $V = (v_1, \dots, v_m)$ that satisfies
\[
\mathrm{Err}_X(O, V) \leq 2 (\nlayer) \sqrt{\frac{\nlayer}{2\rho}} \cdot (\log{\psi} +\log{m} - \log{\beta}) = O\left(  \frac{\log^{1.5}{m}}{\sqrt{\rho}} (\log{\psi} +\log{m}+ \log{\frac{1}{\beta}}) \right),
\]
where $\psi = \frac{b-a}{\min_{k \in [n+1]} \{x_k - x_{k - 1} \}}$.
\end{theorem}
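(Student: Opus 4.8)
The plan is to follow the proof of Theorem~\ref{thm:gen} almost verbatim, replacing basic composition (Lemma~\ref{lem:comp}) with zCDP composition (Lemmas~\ref{lem:cdp} and~\ref{lem:cdp_comp}) in the privacy part, while reusing Lemma~\ref{lem:gqu} unchanged for the utility part after recomputing the per-quantile error $\gamma$ for the new setting of $\eps$.

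For privacy, I would re-run the argument of Lemma~\ref{lem:dp} in terms of R\'enyi divergence. Fix neighboring databases $X$ and $X' = X\cup\{x'\}$ and condition on the level-$1,\dots,i-1$ outputs $V_1,\dots,V_{i-1}$, which determine the data partitions feeding level $i$. Exactly as in Lemma~\ref{lem:dp}, only one level-$i$ subproblem, say the $k$-th, sees a changed input; all other subproblems receive identical datasets under $X$ and $X'$, and the level-$i$ outputs $v_i^1,\dots,v_i^{2^{i-1}}$ are conditionally independent, so the conditional law of $V_i$ differs only through $v_i^k$. Since $v_i^k$ is produced by $\calA$, an $\eps$-DP mechanism with $\eps=\sqrt{2\rho/(\nlayer)}$, Lemma~\ref{lem:cdp} makes it $\tfrac{\eps^2}{2}=\tfrac{\rho}{\nlayer}$-zCDP. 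Thus each level is $\tfrac{\rho}{\nlayer}$-zCDP conditioned on the preceding ones, and iterating the adaptive composition Lemma~\ref{lem:cdp_comp} over the $\nlayer$ levels gives that \algoref\ is $(\nlayer)\cdot\tfrac{\rho}{\nlayer}=\rho$-zCDP.

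For utility, I would feed the new privacy parameter into the single-quantile accuracy bound. By Lemma~\ref{lem:expu}, the exponential mechanism run with parameter $\eps=\sqrt{2\rho/(\nlayer)}$ outputs a point whose gap to a true quantile exceeds $\gamma:=\tfrac{2(\log\psi+\log m-\log\beta)}{\eps}=2\sqrt{\tfrac{\nlayer}{2\rho}}\,(\log\psi+\log m-\log\beta)$ with probability at most $\beta/m$. This is precisely the hypothesis of Lemma~\ref{lem:gqu}, so with probability $1-\beta$ we get $\mathrm{Err}_X(O,V)\le(\nlayer)\gamma=2(\nlayer)\sqrt{\tfrac{\nlayer}{2\rho}}(\log\psi+\log m-\log\beta)$; absorbing constants and writing $-\log\beta=\log(1/\beta)$ yields the claimed $O(\cdot)$ form. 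Combining the two parts proves the theorem.

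The only genuinely delicate point is the privacy step: one must check that the ``pay once per level rather than once per subproblem'' effect, which in the $\eps$-DP analysis follows from the disjointness of the $X_i^j$ at a fixed level, still holds for R\'enyi divergence even though those partitions are random (they depend on higher-level outputs). This is what forces the conditioning on $V_1,\dots,V_{i-1}$ and the use of the \emph{adaptive} zCDP composition of Lemma~\ref{lem:cdp_comp} (iterated $\nlayer$ times) in place of a naive product bound. Once that is set up, the rest is bookkeeping: the choice $\eps=\sqrt{2\rho/(\nlayer)}$ is exactly what makes $\nlayer$ copies of $\tfrac{\rho}{\nlayer}$-zCDP compose to $\rho$, and the algebra turning $1/\eps$ into $\sqrt{(\nlayer)/(2\rho)}$ is immediate.
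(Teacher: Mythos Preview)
Your proposal is correct and follows essentially the same approach as the paper: the paper's proof invokes Lemma~\ref{lem:dp} to get that each level is $\sqrt{2\rho/(\nlayer)}$-DP, converts via Lemma~\ref{lem:cdp} to $\tfrac{\rho}{\nlayer}$-zCDP, composes over the $\nlayer$ levels with Lemma~\ref{lem:cdp_comp}, and then says the error bound follows exactly as in Theorem~\ref{thm:gen}. Your write-up is simply more explicit about the conditioning needed for the per-level zCDP claim and spells out the Lemma~\ref{lem:expu}/Lemma~\ref{lem:gqu} calculation that the paper leaves implicit.
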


\begin{proof}
By Lemma~\ref{lem:dp}, the computation at each recursive level $1\le i\le \nlayer$ is $\sqrt{\frac{2\rho}{\nlayer}}$-DP,
and therefore, by
Lemma~\ref{lem:cdp}, also $(\frac{\rho}{\nlayer})$-zCDP. Since the number of levels is $\nlayer$, by Lemma~\ref{lem:cdp_comp}, the \algoref~is $\rho$-zCDP.
The error bound follows exactly as in the proof of Theorem 
\ref{thm:gen}.
\end{proof}

The following theorem is analogous to Theorem \ref{thm:uniform}. Its privacy proof is as for Theorem \ref{lem:gqu_zcdp} and the error analysis is
as in the proof of Theorem \ref{thm:uniform}.

\begin{theorem}[Uniform Quantiles Utility with zCDP]
\label{lem:uqu_zcdp}
Suppose we implement $\calA$ using the exponential mechanism to solve the single quantile problem, with privacy parameter  $\sqrt{\frac{2\rho}{\nlayer}}$. Then, given data $X$ and quantiles $Q = (q_1, \dots , q_m)$, where $q_i = \frac{i}{m + 1}$, the \algoref~is $\rho$-zCDP, and with probability  $1 - \beta$ output $V = (v_1, \dots, v_m)$ that satisfies
\[
\mathrm{Err}_X(O, V) = O\left(\sqrt{\frac{\log{m}}{\rho}} (\log{\psi} +\log{m}+ \log{\frac{1}{\beta}}) \right), 
\]
where $\psi = \frac{b-a}{\min_{k \in [n+1]} \{x_k - x_{k - 1} \}}$.
\end{theorem}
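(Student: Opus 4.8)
The plan is to combine the privacy argument of Theorem~\ref{lem:gqu_zcdp} with the error argument behind Theorem~\ref{thm:uniform}; none of the supporting lemmas needs to be reproved, so the task is mostly to track which privacy parameter is fed to the exponential mechanism at each level of the recursion.

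For privacy, I would first recall the observation from Lemma~\ref{lem:dp}: at a fixed recursion level $i$ the subproblems run on the pairwise disjoint subdatasets $X_i^1,\dots,X_i^{2^{i-1}}$, so any one individual affects at most one of them, and hence the whole vector $V_i$ of level-$i$ outputs inherits the privacy of a single call to $\calA$. Implementing $\calA$ by the exponential mechanism with privacy parameter $\eps=\sqrt{2\rho/(\nlayer)}$ makes each $V_i$ be $\eps$-DP (Lemma~\ref{lem:exp_dp}), and therefore $\tfrac{\eps^2}{2}=\tfrac{\rho}{\nlayer}$-zCDP by Lemma~\ref{lem:cdp}. Since level $i$ is computed as a (data-dependent, adaptive) post-processing of the earlier levels' outputs, I would apply the adaptive zCDP composition of Lemma~\ref{lem:cdp_comp} across all $\nlayer$ levels to conclude that \algoref~is $(\nlayer)\cdot\tfrac{\rho}{\nlayer}=\rho$-zCDP.

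For utility, Lemma~\ref{lem:expu} gives that a single exponential-mechanism call with privacy parameter $\eps=\sqrt{2\rho/(\nlayer)}$ produces an estimate whose gap to a true quantile exceeds $\gamma := \tfrac{2(\log\psi+\log m-\log\beta)}{\sqrt{2\rho/(\nlayer)}} = 2\sqrt{\tfrac{\nlayer}{2\rho}}\,(\log\psi+\log m-\log\beta)$ with probability at most $\beta/m$ (the solution space having ``effective size'' $\psi$, exactly as in the proof of Theorem~\ref{thm:gen}). Feeding this $\gamma$ and this failure probability $\beta/m$ into Lemma~\ref{lem:uqu} yields, with probability $1-\beta$, that $\mathrm{Err}_X(O,V)\le 2\gamma = 4\sqrt{\tfrac{\nlayer}{2\rho}}\,(\log\psi+\log m-\log\beta)$, which is $O\!\left(\sqrt{\tfrac{\log m}{\rho}}(\log\psi+\log m+\log\tfrac{1}{\beta})\right)$; together with the privacy claim this is the theorem.

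There is no real obstacle here, only bookkeeping: the one point worth care is the choice $\eps=\sqrt{2\rho/(\nlayer)}$, which is exactly the value for which $\nlayer$ levels of zCDP composition return to $\rho$-zCDP --- this is where zCDP outperforms basic $(\eps,\delta)$-composition, since each level may use a per-level budget scaling like $\sqrt{1/\log m}$ rather than $1/\log m$, and it is this larger per-level budget that replaces the extra $\log m$ factor of Theorem~\ref{thm:uniform} by $\sqrt{\log m}$. The only subtlety is that the level-$i$ subproblems genuinely depend on the random estimates $V_1,\dots,V_{i-1}$, so one must use the adaptive form of Lemma~\ref{lem:cdp_comp} (stated for $M''(X)=M'(X,M(X))$), precisely as in the proof of Theorem~\ref{lem:gqu_zcdp}.
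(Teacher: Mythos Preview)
Your proposal is correct and follows exactly the approach the paper indicates: the paper's own ``proof'' for this theorem is just the one-line remark that the privacy argument is identical to Theorem~\ref{lem:gqu_zcdp} and the error analysis is identical to Theorem~\ref{thm:uniform} (i.e., use Lemma~\ref{lem:uqu} in place of Lemma~\ref{lem:gqu}), which is precisely what you do. Your bookkeeping for the per-level privacy parameter $\eps=\sqrt{2\rho/(\nlayer)}$ and the resulting $\gamma$ is correct, and your observation about needing the adaptive form of Lemma~\ref{lem:cdp_comp} matches the paper's use of it in Theorem~\ref{lem:gqu_zcdp}.
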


\section{Experiments}\label{sec:experimental}
We implemented the
 \algoref~in Python and its code is publicly available on GitHub. We used the exponential mechanism for the DP single quantile algorithm $\calA$, with 
 $-\mathrm{Gap}_{X}(o,v)$ as the utility of 
a solution $v$, where
$o$ is a true quantile, see
  Appendix~\ref{app:exp_dp}. 
We also experimented with the 
 AQ-zCDP algorithm, a version of our algorithm that is private with respect to  the definition of zero-concentrated differential privacy (zCDP) (Section~\ref{sec:zcdp}).
We compared our algorithms to  the three best performing algorithms from \cite{gillenwater2021differentially} called: (1) JointExp (2) $\mathrm{AppindExp}$ and (3) AggTree.
We ran the implementations provided by \citet{gillenwater2021differentially}.
We describe these baseline algorithms in Section \ref{sec:base}.
We tested the algorithms using two synthetic datasets and four real datasets
that are described in
Section~\ref{sec:datasets}. For each dataset we compared the  accuracy (Section~\ref{sec:accuracy}) and runtime (Section~\ref{sec:time}) of the competing algorithms.

\subsection{Baseline Algorithms} \label{sec:base}
 \textbf{JointExp} \citep{gillenwater2021differentially}  solves the DP quantiles problem
 by an efficient implementation of the exponential mechanism
 (Definition~\ref{def:exp})
 on $m$-tuples $V=(v_1,\dots, v_m) \in (a,b)^m$, where the utility of $V$ is defined as follows:
\[
u(X, V) = -\sum_{j \in [m+1]} | \mathrm{Gap}_X(o_j, o_{j-1})  - \mathrm{Gap}_X(v_j, v_{j-1})\}|,
\]
where we define $v_0=o_0=a$ and $v_{m+1}=o_{m+1}=b$. The naive implementation of 
the exponential mechanism with
this utility function is computationally difficult:
The number of $m$ tuples $V$ is infinite, and 
there may even be exponentially many (in $m$) equivalence classes of such $m$-tuples. %
 \citet{gillenwater2021differentially}  give an $O(mn \log{n} + m^2n)$  time 
 algorithm to sample from the distribution defined by the exponential mechanism.
The experiments of \cite{gillenwater2021differentially} show that when the number of quantiles, $m$, is small the $\mathrm{JointExp}$ algorithm preforms best.
\newline
\newline
\textbf{AppindExp} solves the DP quantiles problem by applying the exponential  mechanism as described in Appendix~\ref{app:exp_dp} to find every quantile $q_i$ separately.
Since $\mathrm{AppindExp}$ applies the exponential mechanism $m$ times, if we use $\eps/m$ as the privacy parameter for each application of the exponential mechanism, then  by composition we get that $\mathrm{AppindExp}$ is $\eps$-DP.

The
advanced composition theorem  \cite{composition} shows
that if we use $\eps' \approx \eps/\sqrt{m\log(1/\delta)}$ for each application of the exponential mechanism then the overall algorithm would be 
$(\eps,\delta)$-DP. The implementation of \cite{gillenwater2021differentially} uses a
 tighter advanced composition theorem specific for nonadaptive applications of the exponential mechanism \citep{dong2020optimal}, to determine 
 an $\eps'$ for each quatile computation such that the overall composition of the $m$ applications is $(\eps,\delta)$-DP.
 We use $n = 1000$ data points in our experiments, so we chose $\delta = 10^{-6}$ in accordance with the common practice that $\delta \ll \frac{1}{n}$.
\newline
\newline
\textbf{AggTree} \cite{dwork2010differential} and
\cite{chan2011private} implement an $\eps$-DP tree-based counting algorithm for CDF estimation. Given a domain $(a,b)$ the algorithm builds a balanced tree $T$ with  branching factor $b$ and height $h$, so $T$ has $b^h$ leaves. The $j$'th leaf of the tree is associated with sub-domain $[c_{j-1}, c_{j}]$ where $c_j := a + j(b-a)/b^{h}$. Given a dataset $D \in (a,b)^n$, the algorithm starts by counting the number of elements from the dataset that fall into each leaf (i.e.\ are contained in its sub-domain).
Each internal node of $T$ is associated with the sum of the counters of its children (which equals to the number of elements in the leaves of its subtree). In particular, the count associated with the root is $n$.
 Since each element in the data contributes to at most $h$ nodes (the path from the leaf containing it to the root), it suffice to add $\mathrm{Lap}(h/\eps)$ noise to the value of each node to make the counts of $T$ $\epsilon$-DP. 
 
 We can approximate any quantile $q$ using this data structure as follows.
 We find the leftmost leaf $z$ such that the sum of the noisy counts of all leaves to the left of $z$ (including $z$) is at least $q\cdot n$.
 In particular, if the counts were not noisy that $z$ would contain a $q$th quantile.
Let $c(z)$ be the noisy count of $z$ and let 
$c^-(z)$ be the sum of the noisy counts of the leaves to the left of $z$.
Let $p=(qn-c^-(z))/c(z)$. Without noise $p$ would have been the approximate relative quantile of the $q$th quantile among the elements in $z$. 
Let $[c_{k-1}, c_{k}]$ be the range associated with $z$.
We approxmate
the $q$th quantile using linear interpolation inside  $[c_{k-1}, c_{k}]$. That is we return 
 $(1 - p) c_{k-1} + p c_{k} $. We utilize the $\mathrm{AggTree}$ implementation provided by \cite{gillenwater2021differentially}, and the results are given in Section~\ref{sec:accuracy}.

\subsection{Datasets}\label{sec:datasets}
We tested our four algorithms on six datasets.
Two data sets are synthetic. One contains independent samples  from the uniform distribution $U(-5,5)$, and the other contains independent samples from the  Gaussian  $N(0,5)$. Two real continuous datasets  from \cite{Goodreads}, one contains book ratings and the other contains books' page counts. Last we have two categorial datasets from the adults' census data  \citep{UCI2019}. One contains working hours per week and the other ages of different persons. Table~\ref{tab:datasets} shows the properties of each dataset, and Figure~\ref{fig:data} shows the histograms of 100 equal-width bins for each dataset.

\begin{center}
  \begin{minipage}{\textwidth}
  \begin{minipage}[b]{0.49\textwidth}
    \label{fig:data}
    \centering
        \includegraphics[width = 0.86\textwidth]{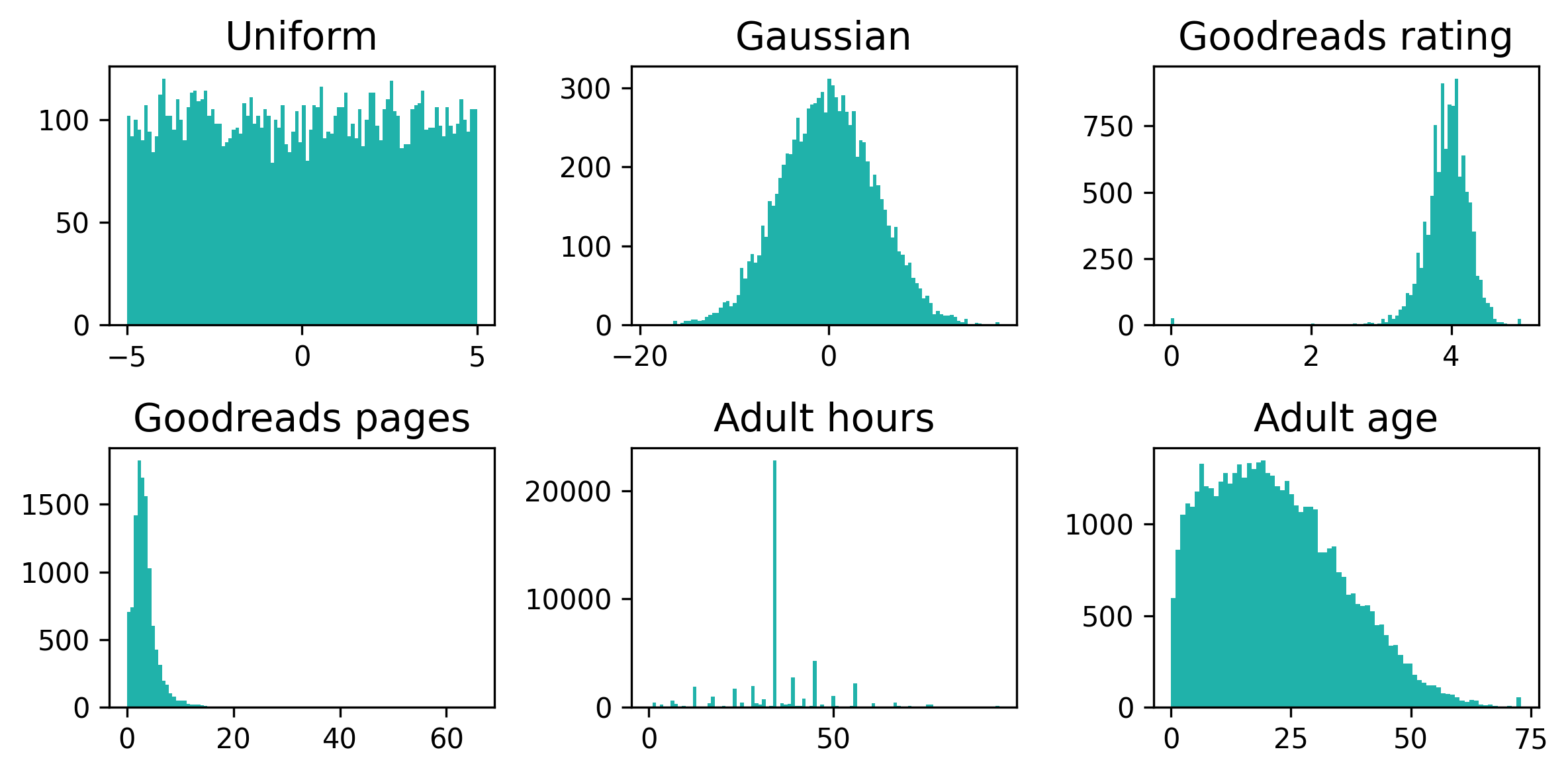}
    \captionof{figure}{Histograms of 100 equal-width bins.}
  \end{minipage}
  \hfill
  \begin{minipage}[b]{0.49\textwidth}
    \label{tab:datasets}
    \centering
      \resizebox{1\textwidth}{!}{
    \begin{tabular}{cccc}
    Data set & Size  & Data Characteristics \\
    \toprule
    Uniform (synthetic)   & 10000  & Continuous\\
    \midrule
    Gaussian (synthetic)  & 10000  &  Continuous \\
    \midrule
    Goodreads rating  & 11123  &  Continuous \\
    \midrule
    Goodreads pages & 11123  &  Continuous \\
    \midrule
    Adult hours  & 48842  &  96 Categories \\
    \midrule
    Adult age  & 48842  &  74 Categories
      \end{tabular}
      }
      \captionof{table}{Data set properties.}
    \end{minipage}
  \end{minipage}
\end{center}

\subsection{Empirical Error Analysis}\label{sec:accuracy}
We compare the error of  \algoname~and the baseline algorithms. Our error metric is the average gap of the approximate quantiles $V=(v_1,\ldots,v_m)$ and and the true ones
$O=(o_1,\ldots,o_m)$:
\[
\frac{\sum_{j \in [m]} \mathrm{Gap}_X(o_j, v_j)}{m} \ .
\]

\paragraph{DP Error Analysis:} We randomly chose $1000$ samples from each dataset and checked the error of each algorithm with
$m=1$ to $m=120$
uniform quantiles in the range $[-100, 100]$.
We used the privacy parameter $\eps = 1$. This process was repeated $100$ times. Figure~\ref{fig:accuracy} shows the average of the error across the $100$ iterations. Figure~\ref{fig:accuracy_zoom_in} zooms in on the error for $m=1,\ldots,35$ quantiles.
\algoname~performs better than the baselines
almost in all experiments, except for a few small values of $m$ where the performance of 
$\mathrm{JointExp}$
 was slightly better.
As the number of quantiles increases
\algoname~wins by a larger margin.

\paragraph{zCDP Error Analysis:} As  in previous experiments we randomly chose $1000$ samples from each dataset and checked the error of each algorithm
for $m=1,\ldots, 120$ 
uniform quantiles in the range $[-100, 100]$. 
All algorithms were $\rho$-zCDP for  $\rho = \frac{1}{2}$.
For this we used 
$\eps'=1/\sqrt{m}$ in each application of the exponential mechanism by $\mathrm{AppindExp}$,
and a Laplace noise of magnitude 
$\eps'=\sqrt{h}$ in each node of
the tree computed by $\mathrm{AggTree}$. 
In each
application of the exponential mechanism by \algoname~we used
$\eps'=\sqrt{1/(\log m +1)}$ as described in 
Theorem \ref{lem:uqu_zcdp}.
The algorithm \textbf{JointExp} with
$\eps=1$ is $\frac{1}{2}$-zCDP by Lemma 
\ref{lem:cdp_comp}, so its error is the same as in the previous experiment.
 Figure~\ref{fig:accuracy_cdp} shows the average of the error for z-CDP across the $100$ iterations. Figure~\ref{fig:accuracy_cdp_zoom_in} zooms in on the error for
 $m=1,\ldots,35$.
 \algoname~performs much better than the baselines even for  small number of quantiles.

\begin{figure}[H]
\begin{center}
  \includegraphics[width=1\linewidth]{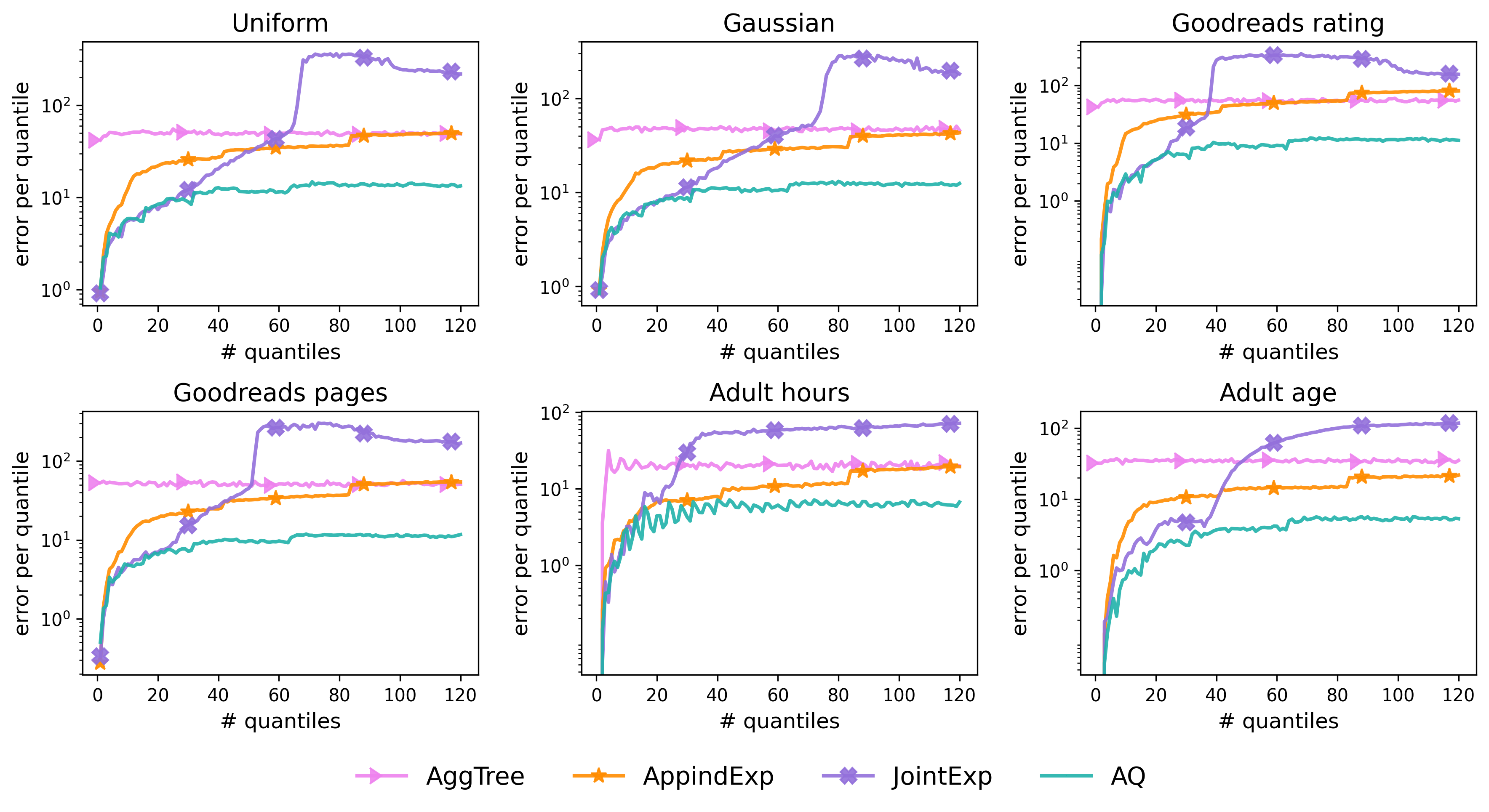}
  \caption{The $x-$axis shows the number of quantiles and the $y-$axis shows the gap per quantile averaged over 100 trials with $\eps = 1$. Note that the graphs are in log scale.}
  \label{fig:accuracy}
\end{center}
\end{figure}

\begin{figure}[H]
\begin{center}
  \includegraphics[width=1\linewidth]{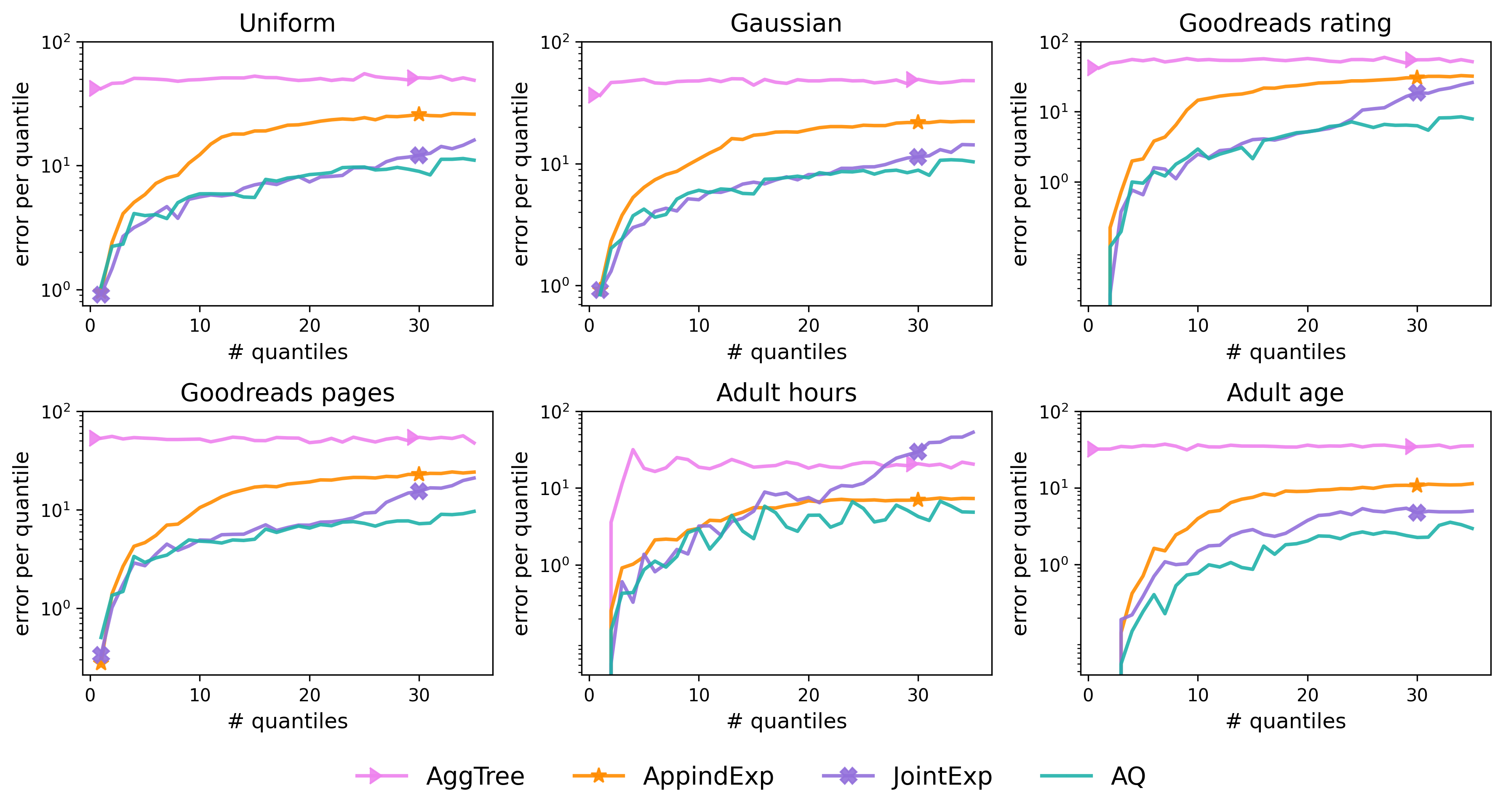}
  \caption{The $x-$axis shows the number of quantiles and the $y-$axis shows the gap per  quantile averaged over 100 trials with $\eps = 1$. Note that the graphs are in log scale.}
  \label{fig:accuracy_zoom_in}
\end{center}
\end{figure}

\begin{figure}[H]
\begin{center}
  \includegraphics[width=1\linewidth]{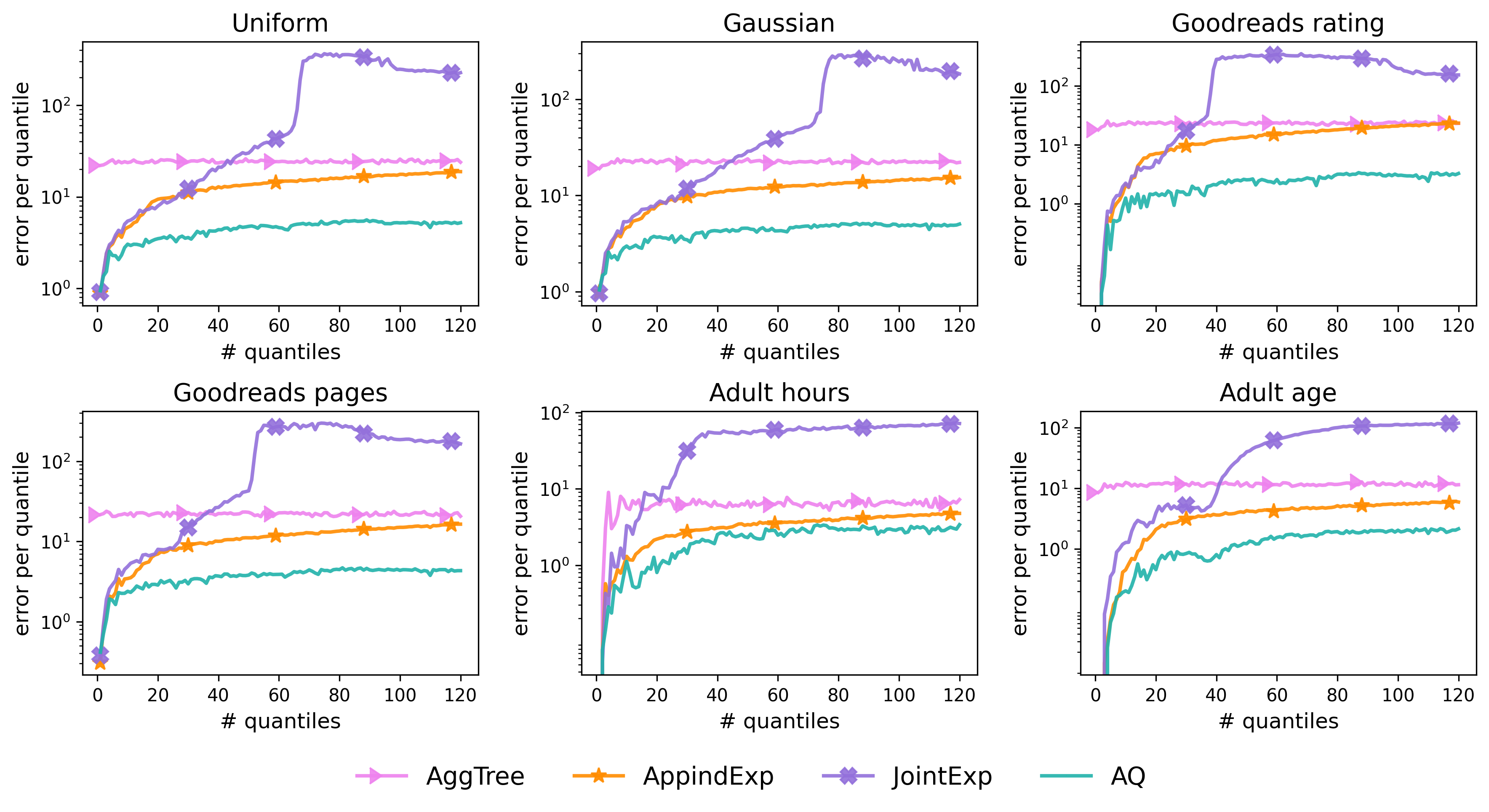}
  \caption{zCDP: The $x-$axis shows the number of quantiles and the $y-$axis shows the gap per  quantile averaged over 100 trials with $\rho = 1/2$. Note that the graphs are in log scale.}
  \label{fig:accuracy_cdp}
\end{center}
\end{figure}

\begin{figure}[H]
\begin{center}
  \includegraphics[width=1\linewidth]{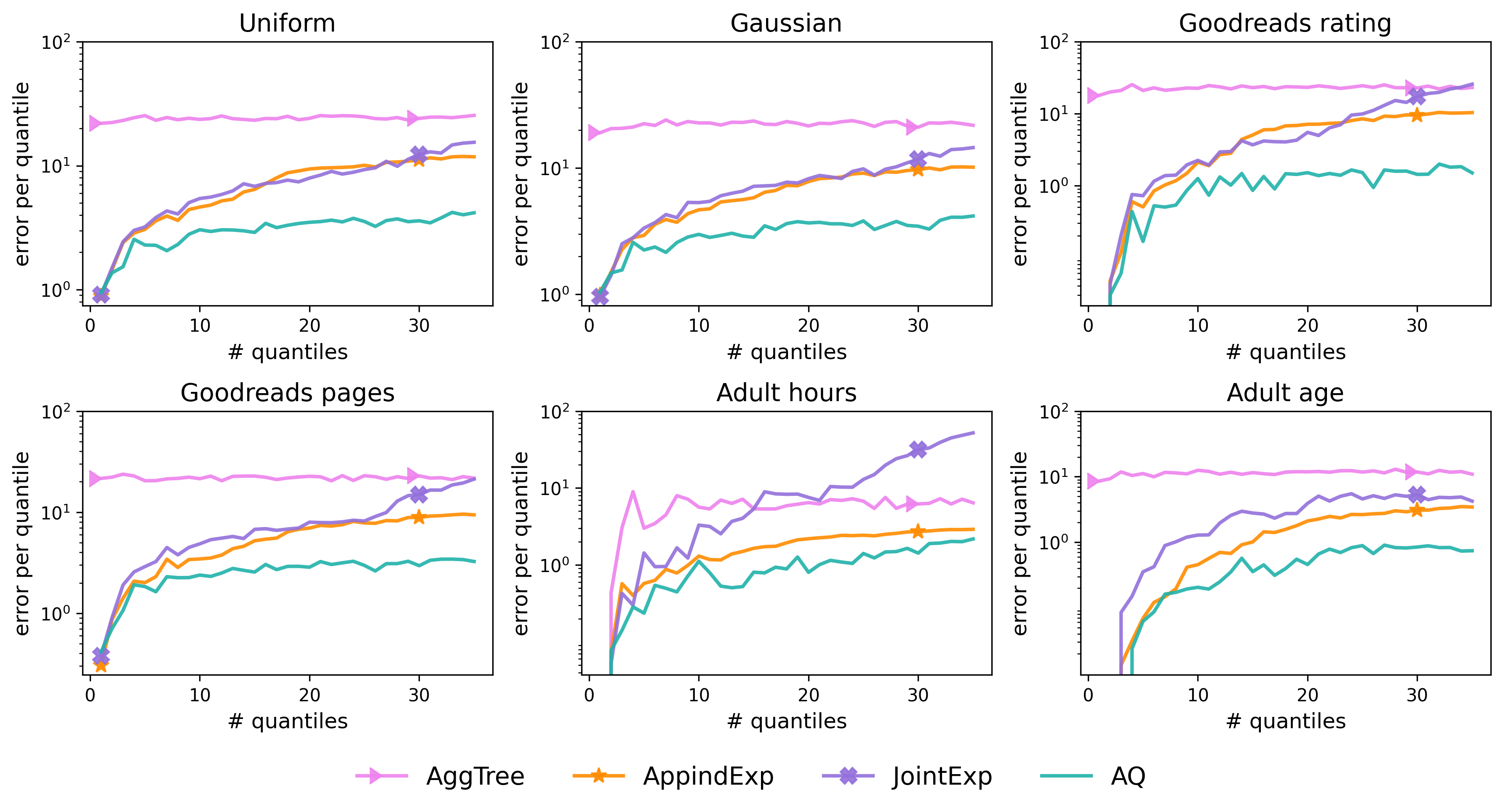}
  \caption{zCDP: The $x-$axis shows the number of quantiles and the $y-$axis shows the gap per  quantile  averaged over 100 trials with $\rho = 1/2$. Note that the graphs are in log scale.}
  \label{fig:accuracy_cdp_zoom_in}
\end{center}
\end{figure}

\subsection{Time Complexity Experiments}\label{sec:time}
Given a sorted dataset,
it takes $O(n)$
time to find all quantiles at a single level of the recursion tree of \algoname. Therefore the overall time complexity (\textit{i.e.,} without the sort) of the \algoref~is $O(n\log{m})$, where $m$ is the number of quantiles. In comparison, the baseline algorithms are computationally more expensive: $\mathrm{JointExp}$ algorithm runs in $O(mn \log{n} + m^2n)$ time, $\mathrm{AppindExp}$ algorithm runs in $O(mn)$ time and $\mathrm{AggTree}$ algorithm runs in $O(n\log{n})$ time. We empirically compared the running time of \algoname~to the running times of the baseline algorithms. For each dataset we measured the time required to find $m \in [120]$ quantiles in a  sub-sample of size $1000$ of each dataset, averaged over $100$ trials per dataset. Figure~\ref{fig:time} shows the average running time across all datasets (Section~\ref{sec:datasets}), each experiment used one core of an Intel i9-9900K processor.
We see that the running time of
\algoname~ is about ten times smaller than of $\mathrm{AggTree}$ and at least a $100$ smaller than of $\mathrm{JointExp}$  and $\mathrm{AppindExp}$ .

\begin{figure}[H]
\begin{center}
  \includegraphics[width=0.3\linewidth]{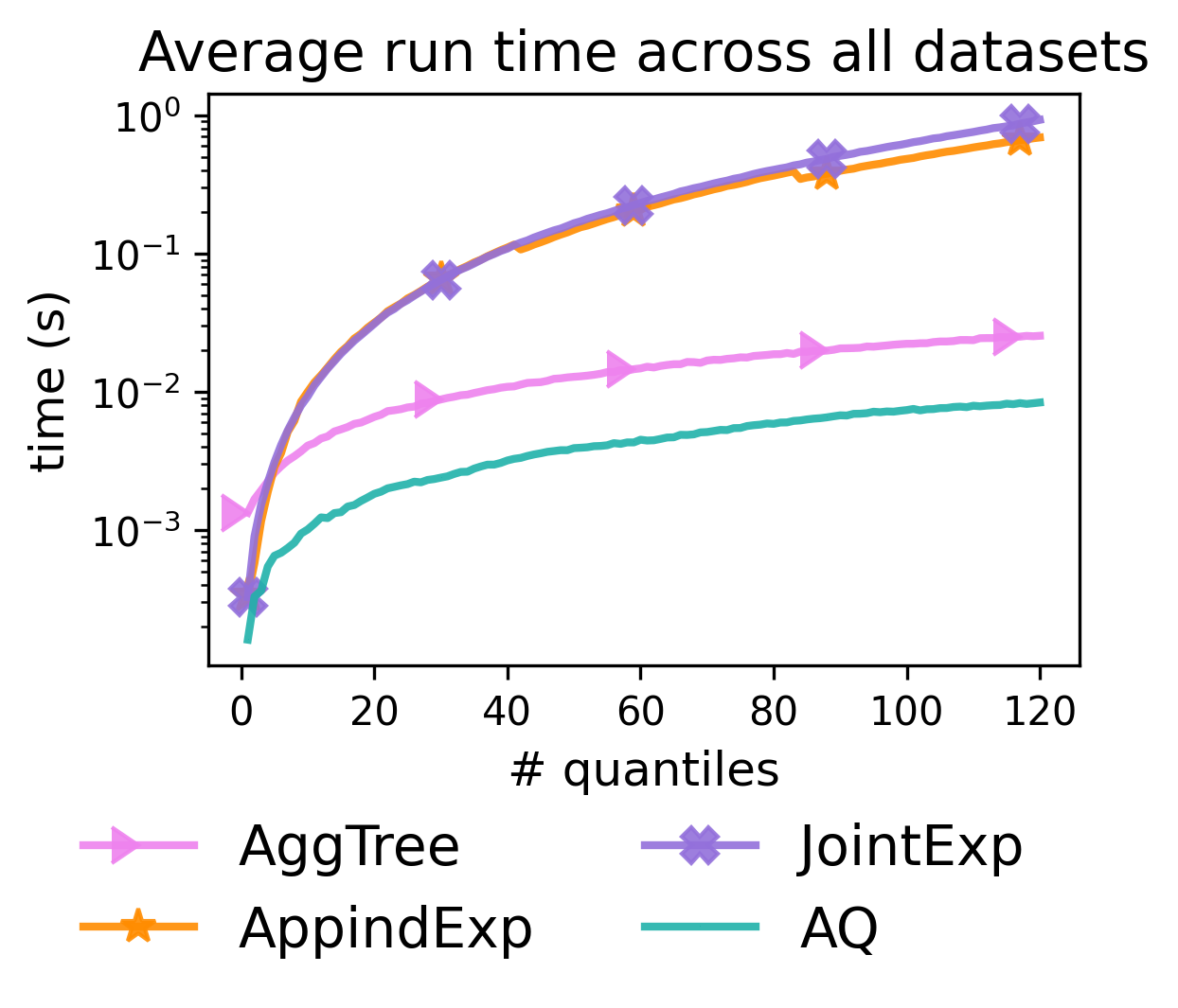}
  \caption{$x-$axis number of quantiles, $y-$axis is the run time averaged across all datasets, each dataset averaged across $100$ trials. Note that the graphs are in log scale.}
  \label{fig:time}
\end{center}
\end{figure}

\bibliographystyle{plainnat}

\newpage

\begin{center}
  {\bf \Large Appendix}  
\end{center}
\appendix

\section{DP Single quantile}\label{app:exp_dp}
In the DP single quantile problem, the input is a single quantile $q$ and a database $X\in (a,b)^n$. The output is a quantile estimate $v \in (a, b)$ such that $\Pr_{x \sim U_X} [x \leq v ] \approx q$ (in a sense that Lemma \ref{lem:expu} would make precise). We solve this problem  using the exponential mechanism of \cite{mcsherry2007mechanism} on $(a,b)$ with the utility function:
\[
u(X, w) =  -\left||\{x \in X | x < w\}| - \lfloor q \cdot n \rfloor \right|  = -\mathrm{Gap}_{X}(o,w),
\]
where $o$ is a $q$-quantile and $w\in (a,b)$. This mechanism  samples each point $w\in (a,b)$ to be the output with density proportional to $\exp(\eps u(X,w)/2)$. Note that 
the sensitivity of $u$ is $1$, that is 
$\max\{ |u(X, \omega)- u(X', \omega)|\} \le 1$, where the maximum is over neighboring datasets $X$ and $X'$ and points $\omega\in (a,b)$. 
The  largest utility is of a $q$-quantile and equals to $0$.

We can sample from this distribution using the technique given by \cite{smith2011privacy} (see their Algorithm 2). The idea is to split the sampling process into two steps:
\begin{enumerate}
    \item
    Let $I_k = [x_{k-1},x_k]]$, $k=1,\ldots, n+1$
    where $x_0=a$, $x_{n+1}=b$, be the set of $n+1$ intervals between data points. We sample an interval from this set of intervals,
 where the probability of sampling $I_k$ is proportional to
    \[
    \Pr[\calA(X) = I_k] \propto \exp\left(\frac{\eps u(X,I_k)}{2}\right) \cdot (x_k - x_{k-1}) ,
    \]
Note that all points in $I_k$ have the same utility which we denote by  $u(X,I_k)$.
    \item
Return a uniform random point from the sampled interval.
\end{enumerate}

\begin{lemma}
\label{lem:expu}
Given dataset $X \in (a,b)^n$ and quantile $q \in [0, 1]$, the exponential mechanism is $\eps$-DP, and with probability  $1 - \beta$ outputs $v$ that satisfies
\[
\mathrm{Gap}_X(o,v) \leq  2 \cdot \frac{\log{\psi}  - \log{\beta}}{\eps},
\]
where $o$ is a true $q$-quantile and $\psi = \frac{b-a}{\min_{k \in [n+1]} \{x_k - x_{k - 1} \}}$.
\end{lemma}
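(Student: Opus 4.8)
The plan is to get privacy out of the way in one line and then concentrate on the utility estimate, whose only genuine subtlety is that the output space $(a,b)$ is infinite, so Lemma~\ref{lem:exp_dp} (with its $|S|$ factor) cannot be invoked as a black box. For privacy I would simply note that the utility $u(X,w)=-\mathrm{Gap}_X(o,w)$ has $L_1$-sensitivity $1$ — as already observed in the text, adding or removing one point shifts both $|\{x\in X: x<w\}|$ and $\lfloor qn\rfloor$ by at most one, so the quantity inside the absolute value shifts by at most one — and that the two-step procedure of Steps~1--2 samples $v$ from exactly the density proportional to $\exp(\eps\,u(X,w)/2)$ on $(a,b)$. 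Hence $\eps$-DP is the standard exponential-mechanism argument of \cite{mcsherry2007mechanism}, with integrals replacing sums.

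For utility, fix $\gamma>0$, let $B=\{w\in(a,b):\mathrm{Gap}_X(o,w)>\gamma\}$ be the bad set, and write the failure probability as the ratio
\[
\Pr[\mathrm{Gap}_X(o,v)>\gamma]=\frac{\int_{B}\exp\!\big(\tfrac{\eps}{2}u(X,w)\big)\,dw}{\int_{a}^{b}\exp\!\big(\tfrac{\eps}{2}u(X,w)\big)\,dw}.
\]
The numerator is handled in one line: on $B$ we have $u(X,w)<-\gamma$, and $B\subseteq(a,b)$, so the numerator is at most $(b-a)e^{-\eps\gamma/2}$. The denominator is the crux, and this is where the infinite space must be dealt with. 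The key point is that the maximal utility $0$ is attained not just at the single point $o$ but on a whole interval of positive length: if $J$ is the open interval strictly between the $\lfloor qn\rfloor$-th and $(\lfloor qn\rfloor+1)$-th smallest points of $X$ — using the conventions $x_0=a$ and $x_{n+1}=b$ to cover $\lfloor qn\rfloor\in\{0,n\}$ — then every $w\in J$ has exactly $\lfloor qn\rfloor$ data points below it, hence $u(X,w)=0$. Since $J$ is one of the $n+1$ gaps between consecutive (extended) data points, its length is at least $\min_{k\in[n+1]}\{x_k-x_{k-1}\}$, so restricting the denominator integral to $J$ gives a lower bound of $\min_{k\in[n+1]}\{x_k-x_{k-1}\}$.

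Combining the two estimates yields $\Pr[\mathrm{Gap}_X(o,v)>\gamma]\le \psi\, e^{-\eps\gamma/2}$, and choosing $\gamma=2(\log\psi-\log\beta)/\eps$ makes the right-hand side equal to $\beta$, which is exactly the claimed bound. The main obstacle — such as it is — is precisely this denominator lower bound: one must exhibit a maximal-utility interval of length at least $\min_k\{x_k-x_{k-1}\}$ and correctly account for the two boundary gaps $(a,x_1)$ and $(x_n,b)$; everything else is a routine one-line estimate.
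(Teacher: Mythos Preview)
Your proof is correct and follows essentially the same approach as the paper's: bound the numerator by $(b-a)e^{-\eps\gamma/2}$, lower-bound the denominator by the length of the single interval $I_o$ on which the utility equals $0$ (hence by $\min_k\{x_k-x_{k-1}\}$), obtain $\psi\,e^{-\eps\gamma/2}$, and solve for $\gamma$. The only cosmetic difference is that the paper phrases the computation as a sum over the $n+1$ intervals (matching the two-step sampling procedure) while you write it directly as an integral; since the utility is piecewise constant, these are the same calculation.
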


\begin{proof}
$\eps$-DP follows in a straightforward way by bounding the ratio of the densities of a point $w$ in the destribution defined by $X$ and in the distribution defined by $X'$;
See also \cite{dwork2014algorithmic}. 

Let  $I_t$ be an interval such that $u(X, I_t) \leq -\gamma$. It follows that the probability of sampling  a point from $I_t$ is at most
\[
\Pr[\calA(X) = I_t] \leq \frac{\exp\left(\frac{-\eps \gamma}{2}\right) \cdot (x_t-x_{t-1})}{\sum_{k \in [n + 1]} \exp\left(\frac{\eps u(X,I_k)}{2}\right) \cdot (x_k - x_{k-1})} \ .
\]
Using the union bound we get that:
\begin{eqnarray*}
\Pr[\calA(X) \leq  - \gamma] 
&\leq  &
 \frac{ \exp\left(\frac{-\eps \gamma}{2}\right)(b-a)}{\sum_{k \in [n + 1]} \exp\left(\frac{\eps u(X,I_k)}{2}\right) \cdot (x_k - x_{k-1})} \\
& \leq &
 \frac{\exp\left(\frac{-\eps \gamma}{2}\right)(b-a)}{\exp\left(\frac{\eps u(X,I_o)}{2}\right)(x_o - x_{o-1})}  \\
& \leq &
 \frac{b-a}{\min_{k\in[n + 1]}(x_k - x_{k-1})} \cdot  \exp\left(\frac{-\eps \gamma}{2}\right) \\
& = &
 \psi\exp\left(\frac{-\eps \gamma}{2}\right)
\end{eqnarray*}
where $I_o$ is the interval containing the $q$-quantiles so 
$u(X, I_0)=0$. 
It follows that with probability less than $\beta$, we sample an interval whose utility is at
most $-\gamma$ for 
\[\gamma=2 \Delta_{u} \cdot \frac{\log{\psi} - \log{\beta}}{\eps}.\]
Since in the second step we sample the $q$-quantile $v$ uniformly from the interval selected in the first step, we get that with probability $1 - \beta$ the output $v$ satisfies:
\[
\mathrm{Gap}_X(o,v) \leq  2 \cdot \frac{\log{\psi}  - \log{\beta}}{\eps}.
\]
\end{proof}

\ifx
Since we run the exponential mechanism over the  domain $\calD$, by Lemma~\ref{lem:exp_dp} we get that this scheme is $\eps$-DP and that the approximation  $v$ which we draw satisfies the  utility guarantee:
\[
\Pr_{v \in \calD}[u(X, v) > \mathrm{OPT}-\gamma] \leq |\calD|\cdot \exp \left(-\frac{\eps \gamma}{2 \Delta_{\mathrm{Gap}}} \right),
\]
where $\mathrm{OPT}=u(X, o)=0$ is the utility of a true $q$-quantile, and 
$\Delta_{\mathrm{Gap}}$ is $\max\{ |u(X, \omega)- u(X', \omega)|\}$ over pairs $X$ and $X'$ of neighboring datasets and elements $\omega\in \calD $.
It follows that with probability
$\beta/m$, $u(X,v) = \mathrm{Gap}_X(o,v)$ is at most:
\[\gamma=2 \Delta_{\mathrm{Gap}} \cdot \frac{\log{\psi} +\log{m} - \log{\beta}}{\eps}.\]
Substituting
$\Delta_{\mathrm{Gap} }\le 1$
we get that:
\[\gamma \leq 2 \cdot \frac{\log{\psi} +\log{m} - \log{\beta}}{\eps}.\]

\begin{lemma}
\label{lem:expu}
Given dataset $X \in \calD^n$ and quantile $q \in [0, 1]$, the exponential mechanism is $\eps$-DP, and with probability  $1 - \frac{\beta}{m}$ output $v$ that satisfies
\[
\mathrm{Gap}_X(o,v) \leq  2 \cdot \frac{\log{\psi} +\log{m} - \log{\beta}}{\eps}
\]
where $o = \lfloor q \cdot n \rfloor$
\end{lemma}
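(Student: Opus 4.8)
The statement bundles two independent claims — that the mechanism is $\eps$-DP and that its error is at most $2(\log\psi-\log\beta)/\eps$ with probability $1-\beta$ — and I would establish them in turn.

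For privacy, the plan is to bound the ratio of output densities directly. Fix neighbouring $X,X'$ and a point $w\in(a,b)$. The utility has $L_1$ sensitivity $1$: under an add/remove both $|\{x\in X:x<w\}|$ and $\lfloor q n\rfloor$ move by at most $1$, and the two shifts partially cancel inside the absolute value. Hence $\exp(\eps u(X',w)/2)$ differs from $\exp(\eps u(X,w)/2)$ by a multiplicative factor in $[e^{-\eps/2},e^{\eps/2}]$, and the same holds for the normalising constants $\int_a^b \exp(\eps u(\cdot,w)/2)\,dw$. Dividing, the density of the output at $w$ under $X$ is within a factor $e^{\eps}$ of that under $X'$, which is $\eps$-DP; this is the standard exponential-mechanism argument (cf.\ \citet{dwork2014algorithmic}), adapted to the continuous range.

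For utility, I would work with the two-step sampler: first draw an interval $I_k=[x_{k-1},x_k]$ with probability proportional to $\exp(\eps u(X,I_k)/2)\cdot(x_k-x_{k-1})$, then return a uniform point of $I_k$. Since $\mathrm{Gap}_X(o,\cdot)$ is constant on each $I_k$ and equals $-u(X,I_k)$ there, the error of the returned point is exactly $-u(X,I_k)$ for the chosen $k$, so it suffices to upper bound $\Pr[u(X,I_k)\le-\gamma]$. For a fixed interval $I_t$ with $u(X,I_t)\le-\gamma$, its selection probability is at most $\exp(-\eps\gamma/2)(x_t-x_{t-1})/Z$ with $Z=\sum_k \exp(\eps u(X,I_k)/2)(x_k-x_{k-1})$. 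Summing over all such $t$, the numerators add to at most $\exp(-\eps\gamma/2)\sum_k(x_k-x_{k-1})=\exp(-\eps\gamma/2)(b-a)$, while $Z$ is at least the single term of an interval $I_o$ on which the number of data points below equals $\lfloor q n\rfloor$ — such an interval exists because this count runs through $0,1,\dots,n$ as $w$ sweeps $(a,b)$ — and that interval has utility $0$, so $Z\ge x_o-x_{o-1}\ge\min_k(x_k-x_{k-1})$. Combining, $\Pr[u(X,I_k)\le-\gamma]\le \psi\,e^{-\eps\gamma/2}$; setting the right-hand side to $\beta$ and solving gives $\gamma=\tfrac{2}{\eps}(\log\psi-\log\beta)$, which is precisely the asserted bound.

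The one place that needs genuine care is the reduction from the infinite output space to the finite collection of inter-data-point intervals: the black-box guarantee of Lemma~\ref{lem:exp_dp} is vacuous when $|S|=\infty$, and the fix is exactly to replace the factor ``$|S|$'' by ``$\psi$'', using that the interval widths sum to $b-a$ and that the partition function is lower-bounded by the contribution of the optimal interval rather than merely by $1$. The remaining steps — the sensitivity computation and the density-ratio bound for privacy — are routine.
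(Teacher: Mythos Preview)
Your proposal is correct and follows essentially the same argument as the paper's proof: both dispatch privacy by the standard density-ratio argument for the exponential mechanism, and both obtain utility by bounding the probability of sampling a low-utility interval via summing numerators to $(b-a)e^{-\eps\gamma/2}$, lower-bounding the normalizer $Z$ by the single contribution of the optimal (utility-$0$) interval, and arriving at the tail bound $\psi\,e^{-\eps\gamma/2}$. The only slip is cosmetic: you set the tail probability to $\beta$ and get $\gamma=\tfrac{2}{\eps}(\log\psi-\log\beta)$, whereas the statement is parametrized with failure probability $\beta/m$, which after the substitution $\beta\mapsto\beta/m$ gives the extra $\log m$ term as asserted.
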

\fi

\end{document}